\def\eqref#1{equation~\ref{#1}}
\def\1{\bm{1}}
\DeclareMathAlphabet{\mathsfit}{\encodingdefault}{\sfdefault}{m}{sl}
\SetMathAlphabet{\mathsfit}{bold}{\encodingdefault}{\sfdefault}{bx}{n}
\def\sR{{\mathbb{R}}}
\newcommand{\E}{\mathbb{E}}
\newcommand{\R}{\mathbb{R}}
\theoremstyle{plain}
\newtheorem{theorem}{Theorem}[section]
\newtheorem{lemma}[theorem]{Lemma}
\theoremstyle{definition}
\newtheorem{assumption}[theorem]{Assumption}
\theoremstyle{remark}
\newcommand{\xx}{\ensuremath\mathbf{x}}
\newcommand{\norm}[1]{\ensuremath{\left\| #1 \right \|}}
\newcommand{\qcr}[1]{{\fontfamily{qcr}\selectfont #1}}
\newcommand{\myparagraph}[1]{
\vspace{0.1cm}\noindent
\textbf{#1.}
}
\newcommand{\cmark}{\ding{51}}%
\newcommand{\xmark}{\ding{55}}%
\icmltitlerunning{ProgFed: Effective, Communication, and Computation Efficient Federated Learning
by Progressive Training}
\begin{document}

\twocolumn[
\icmltitle{ProgFed: Effective, Communication, and \\ Computation Efficient Federated Learning
by Progressive Training}

\icmlsetsymbol{equal}{*}

\begin{icmlauthorlist}
\icmlauthor{Hui-Po Wang}{cispa}
\icmlauthor{Sebastian U. Stich}{cispa}
\icmlauthor{Yang He}{cispa}
\icmlauthor{Mario Fritz}{cispa}
\end{icmlauthorlist}

\icmlaffiliation{cispa}{CISPA Helmholz Center for Information Security, Germany}

\icmlcorrespondingauthor{Hui-Po Wang}{hui.wang@cispa.de}

\icmlkeywords{Machine Learning, ICML}

\vskip 0.3in
]

\printAffiliationsAndNotice{}  %

\newcommand{\mario}[1]{{\color{green}#1}}
\newcommand{\yang}[1]{{\color{cyan}#1}}

\newcommand{\huipo}[1]{{#1}}
\newcommand{\hpw}[1]{\todo[color=red!20,size=\footnotesize,caption={}]{huipo: #1}{}}
\newcolumntype{P}[1]{>{\centering\arraybackslash}p{#1}}

\begin{abstract}
Federated learning is a powerful distributed learning scheme that allows numerous edge devices to collaboratively train a model without sharing their data.
However, training is resource-intensive for edge devices, and limited network bandwidth is often the main bottleneck.
Prior work often overcomes the constraints by condensing the models or messages into compact formats, e.g., by gradient compression or distillation.
In contrast, we propose ProgFed, the first progressive training framework for efficient and effective federated learning. It inherently reduces computation and two-way communication costs while maintaining the strong performance of the final models. We theoretically prove that ProgFed converges at the same asymptotic rate as standard training on full models. Extensive results on a broad range of architectures, including CNNs (VGG, ResNet, ConvNets) and U-nets, and diverse tasks from simple classification to medical image segmentation show that our highly effective training approach saves up to $20\%$ computation and up to $63\%$ communication costs for converged models. As our approach is also complimentary to prior work on compression, we can achieve a wide range of trade-offs by combining these techniques, showing reduced communication of up to $50\times$ at only $0.1\%$ loss in utility. Code is available at \url{https://github.com/hui-po-wang/ProgFed}.
\end{abstract}

\section{Introduction}
\label{sec:intro}
Federated Learning (FL) has led to remarkable advances in the development of extremely large machine learning systems~\citep{mcmahan2017communication}. Federated training methods allow multiple clients (edge devices) to jointly train a global model without sharing their private data with others. 
Training methods in FL suffer from high communication and computational costs, as edge devices are often equipped with limited hardware resources and limited network bandwidth.

Prior literature has studied various compression techniques to address the computation and communication bottlenecks.
These methods can be divided into three main categories (see also Table~\ref{table:comp_all_compression}):
(i)~Compression techniques that represent gradients (or parameters) with fewer bits to reduce communication costs. Prominent examples are quantization~\citep{alistarh2017qsgd, lin2018deep, fu2020don} or sparsification~\citep{Stich2018:sparsifiedSGD, konevcny2016federated}.
(ii)~Model pruning techniques that identify (much smaller) sub-networks within the original models to reduce computational cost at inference~\citep{li2019fedmd, lin2020ensemble}.  
And (iii)  knowledge distillation~\citep{hinton2015distilling} techniques that allow the server to distill the knowledge from the clients with hold-out datasets~\citep{li2019fedmd, lin2020ensemble}.
Despite the significant progress, most of them rarely leverage learning dynamics to reduce resource demands.

Prior work has observed that neural networks tend to stabilize from shallower to deeper layers during training~\citep{raghu2017svcca}. This behavior provides additional scope to improve resource demands. We take advantage of this opportunity by leveraging progressive learning~\citep{karras2018progressive}, a well-known technique in image generation. It first trains the shallower layers on simpler tasks (e.g., images with lower resolution) and gradually grows the network to tackle more complicated tasks (e.g., images with higher resolution). The growing process inherently reduces computation and communication costs when the models are shallower. Despite the appealing features, current success mainly focuses on centralized training, and no previous study has systematically investigated exploiting progressive learning to reduce the costs in federated learning.\looseness=-1

\begin{table*}[btp]
\caption{Comparison of ProgFed to other compression schemes.}
\label{table:comp_all_compression}
\vskip 0.15in
\centering
{\small 
\begin{tabular}{@{}lP{3cm}P{3cm}P{3cm}@{}}
\toprule
Technique
 & \begin{tabular}[c]{@{}c@{}}Computation \\ Reduction\end{tabular} & \begin{tabular}[c]{@{}c@{}}Communication \\ Reduction\end{tabular} & \begin{tabular}[c]{@{}c@{}}Dataset \\ Efficiency\end{tabular} \\ \midrule
Message Compression & \xmark & \cmark & \cmark \\
Model Pruning & \cmark (only for inference) & \xmark & \cmark \\
Model Distillation & \cmark & \cmark & \xmark \\
ProgFed (Ours) & \cmark & \cmark & \cmark \\ \bottomrule
\end{tabular}
}
\vskip -0.1in
\end{table*}

We propose ProgFed, the first federated progressive learning framework that reduces both communication and computation costs while preserving model utility.
Our approach divides the model into several overlapping partitions and introduces lightweight local supervision heads to guide the training of the sub-models. 
The model capacity is gradually increased during training until it reaches the full model of interest. Due to the nature of progressive learning, our method can reduce computational overheads and provide two-way communication savings (both from server-to-client and client-to-server directions) since the shallow sub-models have much fewer parameters than the complete model.

We show that ProgFed converges at the same asymptotic rate as standard training on full models.
Extensive results show that our method can resemble and sometimes outperform the baselines using much fewer costs. Moreover, ProgFed is compatible with classical compression, including sparsification and quantization, and various federated optimizations, such as FedAvg, FedProx, and FedAdam. These results confirm the generalizability of ProgFed and could motivate more advanced FL compression schemes based on progressive learning.

We summarize our main contributions as follows.
\begin{itemize}[leftmargin=12pt,nosep]
    \item We propose ProgFed, the first federated progressive learning framework to reduce the training resource demands (computation and two-way communication). We show that ProgFed converges at the same asymptotic rate as standard training the full model.
    \item We conduct extensive experiments on various datasets (CIFAR-10/100, EMNIST and BraTS) and architectures (VGG, ResNet, ConvNets, 3D-Unet) to show that with the same number of epochs, ProgFed saves around $25\%$ computation cost, up to $32\%$ two-way communication costs in federated classification, and $63\%$ in federated segmentation without sacrificing performance.
    \item Our method allows to reduce communication costs around $2\times$ in classification and $6.5\times$ in U-net segmentation while achieving practicable performance ($\geq 98\%$ of the best baseline). This is beneficial for combating limited training budgets in federated learning.
    \item We show that ProgFed is compatible with existing techniques. It complements classical compression to reduce up to $50\times$ communication costs at only $0.1\%$ loss in utility and can generalize to advanced optimizations with up to $4\%$ improvement over standard training.
\end{itemize}

\begin{figure*}[th!]
\centering
    \begin{tabular}{cc}
    \includegraphics[trim=14cm 8.4cm 14cm 7cm,clip,width=0.4\linewidth]{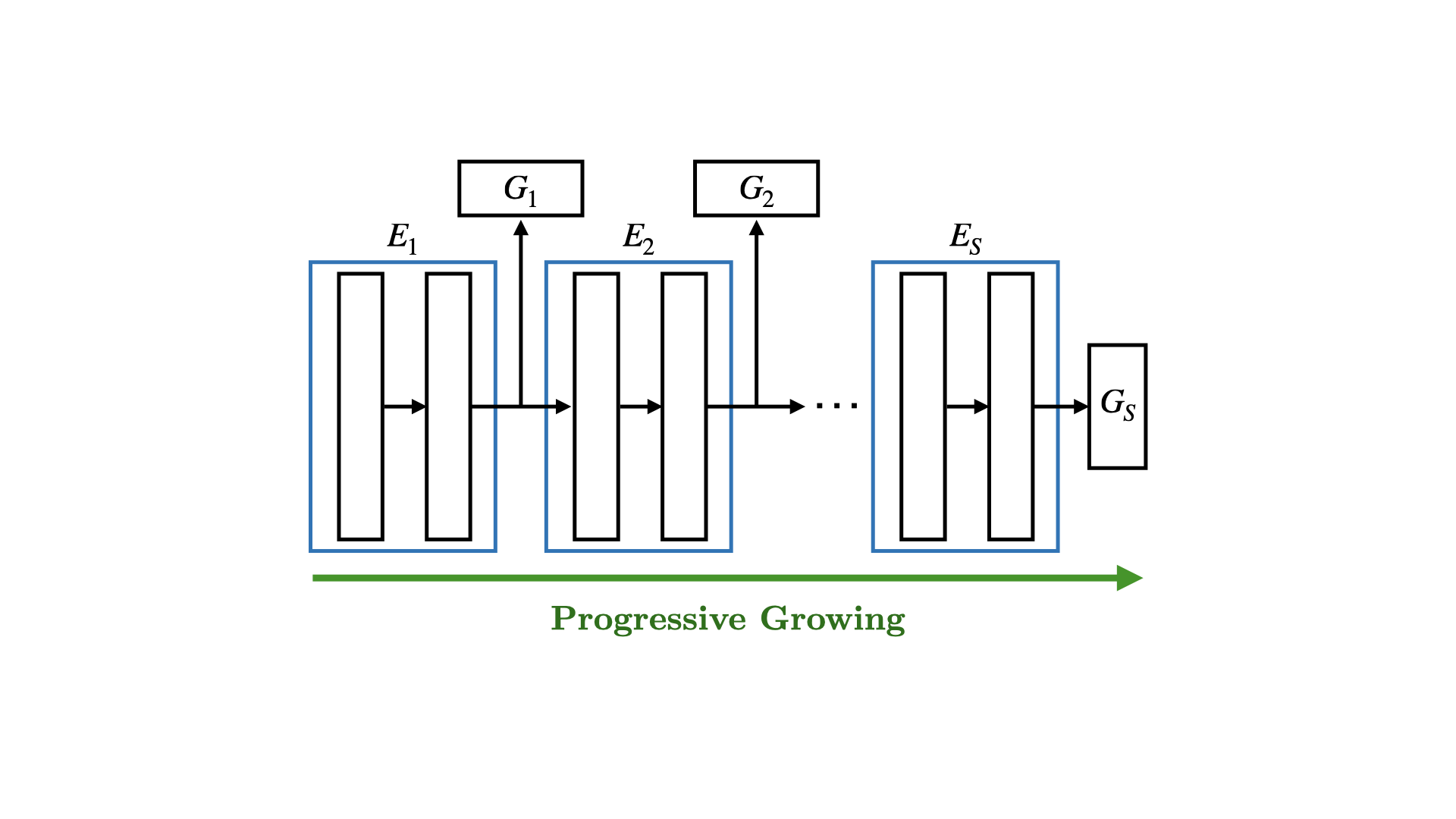} & \hspace{0.67cm} \includegraphics[trim=14cm 8.4cm 14cm 7cm,clip,width=0.4\linewidth]{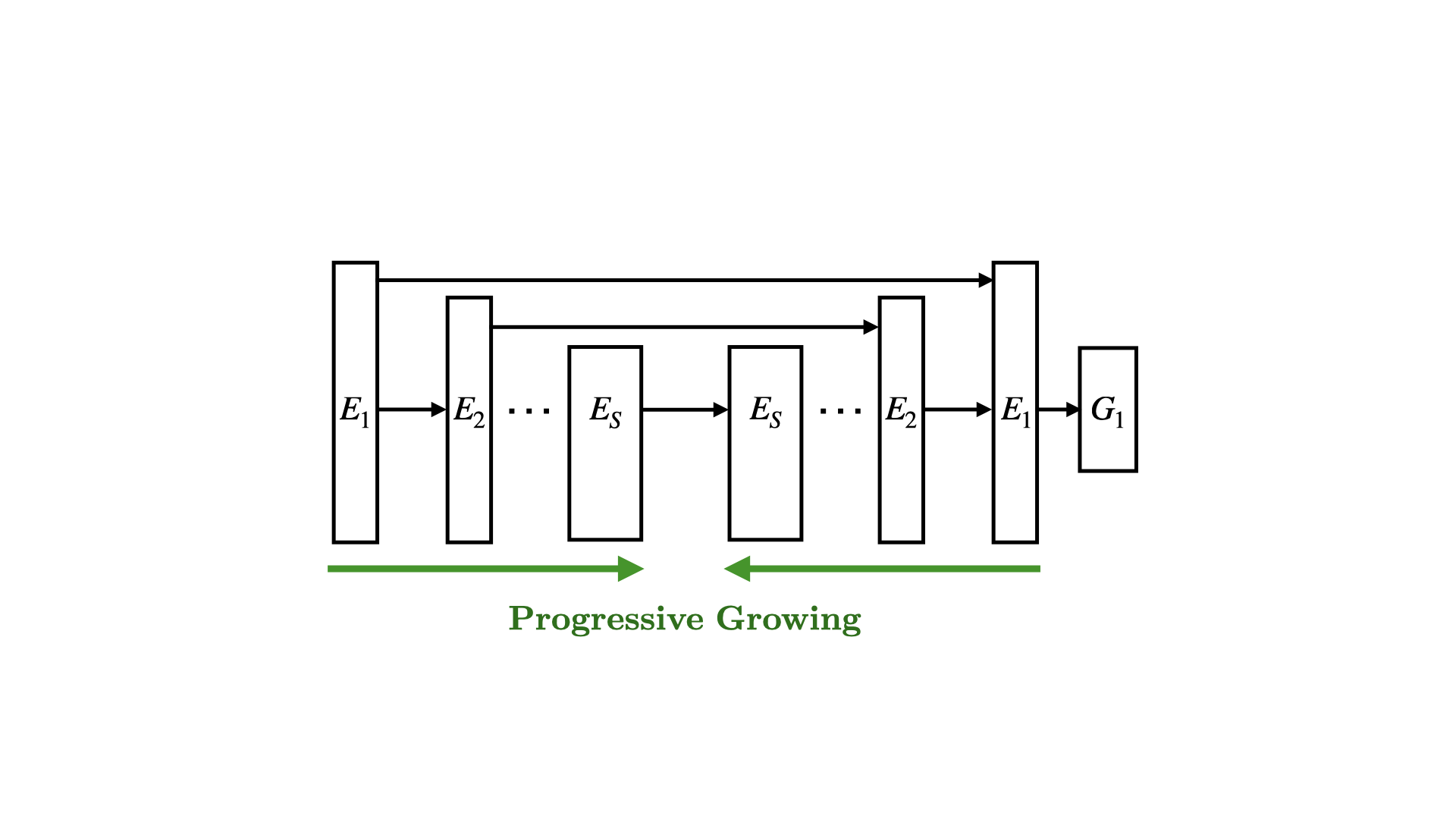} \\
{\footnotesize    (a) Feed-forward networks} &{ \footnotesize (b) U-nets (symmetric growing).}
\end{tabular}
\caption{An overview of ProgFed on (a) feed-forward networks and (b) U-nets (symmetric growing illustrated). We progressively train a deep neural network from the shallower sub-models, e.g. $\mathcal{M}^1$ consisting of the main block $E_1$ and head $G_1$ (Eq.~\ref{eq:def_submodel}), gradually expanding to the full model $\mathcal{M}^S=\mathcal{M}$ (Eq.~\ref{eq:def_model}). Note that the local heads $G_i$ in feed-forward networks are only used for training sub-models and discarded when progressing to the next stage.}
\label{fig:arch}
\vskip -0.1in
\end{figure*}

\section{Related Work}
\label{sec:related_work}

\myparagraph{Progressive Learning}
Progressive learning was initially proposed to stabilize training processes and has been widely considered in vision tasks such as image synthesis~\citep{karras2018progressive}, image super-resolution~\citep{wang2018fully}, facial attribute editing~\citep{wu2020cascade} and representation learning~\citep{li2019progressive}. The core idea is to train the model from easier tasks (e.g., low-resolution outputs or shallower models) to difficult but desired tasks (e.g., high-resolution outputs or deeper models). 
The benefit of progressive learning in cost reduction, namely lower resource demands when the models are shallow, has not been explored in FL. Recent work has investigated partial updates for FL, but they either fall in greedy layer-wise updates~\citep{belilovsky2020decoupled} or are specific to certain models, e.g., transformers~\citep{he2021pipetransformer}. In this work, we systematically investigate the application of progressive learning for general federated tasks and demonstrate its convergence rate and compatibility.

\myparagraph{Message compression} Prior work has studied message compression (e.g., on gradients or model weights) to reduce the communication costs in distributed learning. The first category focuses solely on client-to-server compression~\citep{alistarh2017qsgd, wen2017terngrad, lin2018deep,  bernstein2018signsgd, Stich2018:sparsifiedSGD,  konevcny2016federated, karimireddy2019error, fu2020don, Stich2020:compression}. To name a few, \citet{konevcny2016federated} reduce the costs by sending sparsified gradients and compressing them with probabilistic quantization. \citet{alistarh2017qsgd, wen2017terngrad} prove the convergence of probabilistic quantized SGD. SignSGD~\citep{bernstein2018signsgd, karimireddy2019error} significantly compresses the gradients with only one bit.
Compression of server-to-client communication is non-trivial and has been a focus of recent research~\citep{yu2019double_quantization,tang2019doublesqueeze,liu2020double_residual,philippenko2020bidirectional,he2021cossgd}. Instead of dedicated designs, our method inherently reduces two-way communication costs and complements existing methods, as we will show in Section~\ref{sec:exp}.

\myparagraph{Model pruning and distillation} In addition to compressing messages, model pruning discards redundant weights according to diverse criteria~\citep{mozer1989skeletonization,lecun1990optimal,frankle2018the,lin2019dynamic}, while model distillation~\citep{bucilua2006model,hinton2015distilling} transmits logits rather than gradients to improve communication efficiency~\citep{li2019fedmd, lin2020ensemble, he2020group, choquette2020capc}. However, the former usually happens after training, and the latter one either requires additional query datasets~\citep{li2019fedmd, lin2020ensemble} or cannot enjoy the merit of datasets from different sources~\citep{choquette2020capc}. In contrast, ProgFed stays dataset-efficient and reduces resource demands throughout training.

\myparagraph{Early-exit networks} Early-exit networks~\citep{kaya2019shallow,scardapane2020should,teerapittayanon2016branchynet} are equipped with multiple output branches. Each data sample can opt for different branches at test time, thus reducing the computation costs of inference. Despite the efficiency at test time, early-exit networks~\citep{scardapane2020should,teerapittayanon2016branchynet} often consume more computation power during training since they have to maintain all auxiliary classifiers (heads). On the other hand, our method is designed for computation and communication cost reduction at \emph{training} time. We discard temporal heads and consume fewer costs than the entire and early-exit networks. These features outline the main difference between our method and early-exit networks.

\section{ProgFed}
\label{sec:method}
\huipo{In this work, we leverage the learning dynamics that networks stabilize from shallower to deeper layers for cost reduction. Motivated by progressive learning, we propose ProgFed that progressively expands the network from a shallower one to the complete model. We provide convergence analysis on single-client training for conciseness while it stays efficient and can generalize to federated multi-client training. The proposed model splitting and progressive growing are illustrated in Figure~\ref{fig:arch}, and the federated optimization scheme is summarized in Algorithm~\ref{alg:progfed}.
We now present the proposed method in detail below.}

\subsection{Proposed Method}
\label{ssec:proposed_method}

 \myparagraph{Model Structure} We now describe the proposed training method. For a given a machine learning model $\mathcal{M}$, i.e.\ a function $\mathcal{M}(\cdot,\xx) \colon \sR^n \rightarrow \sR^k$ that maps $n$-dimensional input to $k$ logits for parameters (weights) $\xx \in \sR^d$, we assume that the network can be written as a composition of blocks (feature extractors) $E_i$ along with a task head $G_S$, namely,
\begin{equation}
\label{eq:def_model}
    \mathcal{M} \vcentcolon=  G_S \circ \mathop{\bigcirc}^S_{i=1} E_i  = G_S \circ E_S \circ \dotsb \circ E_2 \circ E_1 \,.
\end{equation}
Note that the $E_i$'s could denote e.g., a stack of residual blocks or simply a single layer. 
The learning task is solved by minimizing a loss function of interest $\mathcal{L}: \sR^k \rightarrow \sR$ (e.g., cross-entropy) that maps the predicted logits of $\mathcal{M}$ to a real value, i.e.\ minimization of $f(\xx):= \mathcal{L} \circ \mathcal{M}(\xx)$.

 \myparagraph{Progressive Model Splitting} To achieve progressive learning, we first divide the network $\mathcal{M}$ into $S$ stages, denoted by $\mathcal{M}^s$, for $s \in \{1, \dots, S\}$ associated with the split indices. We additionally introduce local supervision heads for providing supervision signals. Formally, we define
\begin{equation}
\label{eq:def_submodel}
    \mathcal{M}^s \vcentcolon= G_s \circ \mathop{\bigcirc}_{i=1}^s E_i \,,
\end{equation}
where $G_s$ is a newly introduced head. Each head $G_s$, for $s < S$ consists of only a pooling layer and a fully-connected layer in our experiments for feed-forward networks. The motivation is that simpler heads may encourage the feature extractors $E_i$ to learn more meaningful representations. Note that the sub-model $\mathcal{M}^s:\sR^n \rightarrow \sR^k$ produces the same output size as the desired model $\mathcal{M}$; therefore, its corresponding loss  $f^s(\xx^s):= \mathcal{L}\circ \mathcal{M}^s(\xx^s)$ can be trained with the same loss criterion $\mathcal{L}$ as the full model. \looseness=-1
 
 \myparagraph{Training of Progressive Models} 
We propose to train each sub-model $\mathcal{M}^s$ for $T_s$ iterations (a certain fraction of the total training budget) and gradually grow the network from $\mathcal{M}^1$ to $\mathcal{M}^S=\mathcal{M}$.
When growing the network from stage $s$ to $s+1$, we pass the corresponding parameters of the pretrained  blocks $E_i$, $i\leq s$, to the next model $\mathcal{M}^{s+1}$ and initialize its blocks $E_{s+1}$ and $G_{s+1}$ with random weights.
Once the progressive training phase is completed, we continue training the full model $\mathcal{M}$ in an end-to-end manner for the remaining iterations.
The length $T_s$ of each progressive training phase is a parameter that could be fine-tuned individually for each stage (depending on the application) for best performance. 
However, as a practical guideline that we adopted for all experiments in this paper, we found that denoting roughly half of the total number of training iterations $T$ to progressive training, and setting $T_s = \smash{\frac{T}{2S}}$ for $s < S$, $T_S = \smash{\frac{T(S+1)}{2S}}$, such that $T=\smash{\sum_{s=1}^S} T_s$, works well across all considered training tasks. 

 \myparagraph{Extension to U-nets} In addition to feed-forward networks (Figure~\ref{fig:arch}(a)), we show that our method can generalize to U-net (Figure~\ref{fig:arch}(b)). U-net typically consists of an encoder and a decoder. Unlike feed-forward networks, the encoder sends both the final and intermediate features to the decoder. Therefore, we propose to grow the network from outer to inner layers as shown in Figure~\ref{fig:arch}(b) and retain the original output heads as $G_i$. We refer to the strategy as the \textit{Symmetric} strategy. In contrast, we propose another baseline, the \textit{Asymmetric} strategy, which adopts the full encoder at the beginning and gradually grows until it reaches the full decoder. For this strategy, we also adopt several temporal heads for earlier training stages. As we will show in Section~\ref{ssec:communication_efficiency}, the \textit{Symmetric} strategy significantly outperforms the \textit{Asymmetric} strategy, which supports the notion of progressive learning. \looseness=-1

 \myparagraph{Practical considerations} We empirically observe that learning rate restart~\citep{loshchilov2016sgdr} facilitates training in the centralized setting. This is because sub-models may overfit the local supervision while learning rate restart helps the sub-models escape from the local minima and quickly converge to a better minima. On the other hand, warm-up~\citep{goyal2017accurate} for the new layers plays an important role in federated learning. Model weights often take a longer time to converge in federated learning, which makes the newly added layers introduce more noise to the previous layers. With warm-up, the new layers recover the performance without affecting previous layers. In particular, warm-up leads to around 2\% difference (53.23 vs. 51.09) on CIFAR-100 with ResNet-18.

\begin{figure}[t]
\vskip -0.1in
\begin{algorithm}[H]
\caption{ProgFed---Progressive training in a Federated Learning setting}\label{alg:progfed}
\begin{algorithmic}[1]
\STATE{\textbf{Input:} initialization $\xx_0^1$, model $\mathcal{M}(\cdot, \xx_0)$, iteration budgets $T$, $T_s$, number of stages $S$, $s = 1$, desired number of  local updates $J \geq 1$, learning rate $\eta$}
\STATE{\textbf{Output:} parameters $\mathbf{x}_T$ and trained model $\mathcal{M}(\cdot, \xx_T)$}
\FOR{$t = 1,\dots, T$}
    \STATE // Switch from $\mathcal{M}^{s}$ to  $\mathcal{M}^{s+1}$ after $T_s$ iterations
    \IF{$\min(S, \lceil \frac{t}{T_s} \rceil) > s$}
        \STATE // Initialize new block $E_{s+1}$ and new head $G_{s+1}$
        \STATE initialize parameter $\xx_t^{s+1}$ randomly
        \STATE // Copy parameters of shared blocks $E_1, \dots, E_s$.
        \STATE $\xx^{s+1}_{t \mid E_{s}}\gets \xx^s_{t \mid E_s}$
        \STATE $s \gets s + 1$ \hfill (the old head $G_s$ is discarded)
    \ENDIF
    \STATE // Standard Federated Learning on active model $\mathcal{M}^s$
    \STATE Sample a subset $\mathcal{C}$ of clients 
    
    \FOR{each active client $c \in \mathcal{C}$}
        \STATE initialize $\mathbf{x}^s_{c, 1} \gets \mathbf{x}_t^s$ \hfill (send $\xx_t^s$ to active clients)
        \IF{warm-up is needed after growing}
            \STATE // Warm up the newly added layers
            \STATE freeze $\xx^s_{c,1 \mid E_{s-1}}$ and warm-up $\xx^s_{c,1 \mid E_{s-1}^\complement}$ 
        \ENDIF
        \FOR{$j = 1, \dots, J$ \hfill $\triangledown$ Local SGD updates}
            \STATE // Compute (mini-batch) gradient $g_c^s$ on client $c$'s data
            \STATE $\mathbf{x}^s_{c, j+1} = \mathbf{x}^s_{c, j} - \eta g^s_{c}(\xx^s_{c, j})$ 
        \ENDFOR
        \STATE $\Delta_c = \mathbf{x}_{c, J} - \mathbf{x}_{c, 1}$
    \ENDFOR
    \STATE // Aggregate updates from the clients
    \STATE $\mathbf{x}^s_{t+1} = \mathbf{x}^s_{t} + \frac{1}{|\mathcal{C}|}\sum_{c=1}^{\mathcal{C}} \Delta_c$ 
\ENDFOR
\end{algorithmic}
\end{algorithm}
\vskip -0.2in
\end{figure}
 
\subsection{Convergence Analysis}
\label{ssec:thm}
In this section we prove that progressing training converges to a stationary point at the rate $\mathcal{O}\bigl(\frac{1}{\epsilon^2}\bigr)$, i.e.\ with the same  asymptotic rate as SGD training on the full network. For this, we extend the analysis from  \citep{mohtashami2021simultaneous} that analyzed the training of partial subnetworks. However, in our case the networks are not subnetworks, $\mathcal{M}^{s} \not\subset \mathcal{M}^{s+1}$ (as the head is not shared), and we need to extend their analysis to progressive training with different heads.

\myparagraph{Notation} 
We denote by $\xx^s$ the parameters of $f^s$, $s \in \{1,\dots,S\}$ and abbreviate $\xx^S=\xx \in \R^d$ for convenience. For $s \leq i \leq S$ let
${x^i}_{\mid E_s}$ and $\nabla f^i(\xx^i)_{\mid E_s}$ denote the projection of the parameter $\xx^i$ and gradient $\nabla f^i (\xx^i)$ to the dimensions corresponding to the parameters of $E_1$ to $E_s$ (without parameter of $E_{s+1}$ to $E_i$ and without head $G_i$).
In iteration $t$, the progressive training procedure updates the parameters of model $f^s$, $s = \min(S, \lceil \frac{t}{T_s}\rceil)$. For convenience, we do not explicitly write the dependency of $s$ on $t$ below, and use the shorthand $x_t^s$ to denote the corresponding model at timestep $t$. We further define $\xx_t$ such that $\xx_{t\mid E_s} = x^t_{s\mid E_s}$ and $\xx_{t \mid E_s^\complement} = \xx_{0 \mid  E_s^\complement}$ on the complement. %

\begin{assumption}[$L$-smoothness]
\label{asmp_lsmooth}
The function $f \colon \mathbb{R}^d \to \mathbb{R}$ is differentiable and there exists a constant $L>0$ such that
\begin{equation}
    \norm{\nabla f(\mathbf{x}) - \nabla f(\mathbf{y})} \leq L\norm{\mathbf{x}-\mathbf{y}}.
\end{equation}
\end{assumption}

We assume that for every input $\mathbf{x}^s$, we can query an unbiased stochastic gradient $g^s(\mathbf{x}^s)$ with $\E [g^s(\mathbf{x}^s)] = \nabla f^s(\mathbf{x}^s)$. We assume that the stochastic noise is bounded. %
\begin{assumption}[Bounded noise]
\label{asmp_msbound}
There exist a parameter $\sigma^2\geq0$ such that for any $s \in \{1, \dots, S\}$:

\begin{equation}
    \mathds{E}\norm{g^s(\mathbf{x}^s) - \nabla f^s(\mathbf{x}^s)}^2 \leq 
    \sigma^2\,, \qquad \forall \mathbf{x^s} \,.
\end{equation}
\end{assumption}

The progressive training updates $\xx_{t+1}^s = \xx^t_s - \gamma_t g^s(\xx_t^s)$ with a SGD update on the model $\xx^s_t$. With the two  assumptions above, which are standard in the literature, we prove the convergence of sub-models $\mathcal{M}^s$ as well as the model of interest $\mathcal{M}$.
\begin{theorem}
\label{thm:convergence}
Let Assumptions~\ref{asmp_lsmooth} and \ref{asmp_msbound} hold, and let the stepsize in iteration $t$ be 
 $\gamma_t=\alpha_t\gamma$ with $\gamma=\min\Big\{\frac{1}{L}, (\frac{F_0}{\sigma^2T})^\frac{1}{2}\Big\}$, $\alpha_t=$$\min\Big\{1, \frac{\langle \nabla f(\mathbf{x}_t)_{\mid E_s}, \nabla f^s(\mathbf{x}_t^s)_{\mid E_s}\rangle}{\| \nabla f^s(\mathbf{x}_t^s)_{\mid E_s} \|^2} \Big \}$. %
 Then it holds for any $\epsilon>0$,
\begin{itemize}[leftmargin=12pt,nosep]
    \item $\frac{1}{T}\sum^{T-1}_{t=0}\alpha_t^2\norm{\nabla f^s(\mathbf{x}_t^s)_{\mid E_s}}^2 < \epsilon$, after at most the following number of iterations T:
    \begin{equation}
        \mathcal{O}\left(\frac{\sigma^2}{\epsilon^2}+\frac{1}{\epsilon} \right) \cdot LF_0\,.
    \end{equation}
    \item 
    Let $q \vcentcolon= \max_{t\in[T]} \Big( q_t \vcentcolon= \frac{\| \nabla f(\mathbf{x}_t) \| }{\alpha_t \| \nabla f^s(\mathbf{x}_t^s)_{\mid E_s} \| } \Big)$, then $\frac{1}{T}\sum^{T-1}_{t=0}\norm{\nabla f(\mathbf{x}_t)}^2 < \epsilon$ after at most the following iterations $T$:
    \begin{equation}
        \mathcal{O} \left(\frac{q^ 4\sigma^2}{\epsilon^2}+\frac{q^2}{\epsilon} \right) \cdot LF_0\,,
    \end{equation}
    \end{itemize}
    where $F_0 \vcentcolon=f(\mathbf{x}_0)-(\min_\xx f(\xx))$.
\end{theorem}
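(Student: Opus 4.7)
My plan is to follow the classical descent-lemma argument for non-convex SGD with $L$-smooth objectives, but carefully track that the ``working'' iterate $\xx_t$ (the one fed to $f$) only moves along the $E_s$ coordinates, while the stochastic gradient comes from the sub-model $f^s$. The choice of $\alpha_t$ is engineered exactly so that the inner product between $\nabla f(\xx_t)_{\mid E_s}$ and $\nabla f^s(\xx_t^s)_{\mid E_s}$ can be converted into a monotone decrease.

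First I will write the smoothness inequality for $f$ at $\xx_t, \xx_{t+1}$. By construction, $\xx_{t+1}-\xx_t$ is supported on the $E_s$ coordinates and equals $-\gamma_t g^s(\xx_t^s)_{\mid E_s}$; moreover $\xx_{t\mid E_s}=\xx^s_{t\mid E_s}$, so $\nabla f^s(\xx_t^s)_{\mid E_s}$ is an unbiased estimator of the gradient one actually applies. Taking conditional expectation and using Assumption~\ref{asmp_msbound} (restricted to the $E_s$ block, whose variance is bounded by $\sigma^2$), I obtain
\begin{equation*}
\E[f(\xx_{t+1})] \le f(\xx_t) - \gamma_t \langle \nabla f(\xx_t)_{\mid E_s},\nabla f^s(\xx_t^s)_{\mid E_s}\rangle + \tfrac{L\gamma_t^2}{2}\bigl(\norm{\nabla f^s(\xx_t^s)_{\mid E_s}}^2+\sigma^2\bigr).
\end{equation*}

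Next, I will do a case split on the definition of $\alpha_t$. When $\alpha_t=1$ the inner product dominates $\norm{\nabla f^s(\xx_t^s)_{\mid E_s}}^2 = \alpha_t^2\norm{\nabla f^s(\xx_t^s)_{\mid E_s}}^2$; when $\alpha_t<1$, the inner product equals $\alpha_t\norm{\nabla f^s(\xx_t^s)_{\mid E_s}}^2$, and multiplying by the extra $\gamma\alpha_t$ from $\gamma_t$ again produces the factor $\alpha_t^2$. In both regimes
$\gamma_t\langle\nabla f(\xx_t)_{\mid E_s},\nabla f^s(\xx_t^s)_{\mid E_s}\rangle \ge \gamma\alpha_t^2\norm{\nabla f^s(\xx_t^s)_{\mid E_s}}^2$.
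Combined with $\gamma \le 1/L$ this yields the per-step descent
\begin{equation*}
\E[f(\xx_{t+1})] \le f(\xx_t) - \tfrac{\gamma}{2}\alpha_t^2\norm{\nabla f^s(\xx_t^s)_{\mid E_s}}^2 + \tfrac{L\gamma^2 \sigma^2}{2}\alpha_t^2.
\end{equation*}
Telescoping from $t=0$ to $T-1$, bounding $f(\xx_T)\ge \min f$, and using $\alpha_t\le 1$ on the variance term, gives
\begin{equation*}
\tfrac{1}{T}\sum_{t=0}^{T-1}\alpha_t^2\norm{\nabla f^s(\xx_t^s)_{\mid E_s}}^2 \le \tfrac{2F_0}{\gamma T} + L\gamma\sigma^2.
\end{equation*}
Plugging in the stated step size $\gamma=\min\{1/L,\sqrt{F_0/(\sigma^2 T)}\}$ and inverting the resulting $\mathcal O(LF_0/T + \sqrt{LF_0\sigma^2/T})$ bound in $T$ yields the iteration complexity $\mathcal O\bigl(\sigma^2/\epsilon^2+1/\epsilon\bigr)\cdot LF_0$ of the first bullet.

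For the second bullet, the definition $q_t = \norm{\nabla f(\xx_t)}/(\alpha_t\norm{\nabla f^s(\xx_t^s)_{\mid E_s}})$ gives $\norm{\nabla f(\xx_t)}^2 \le q^2\alpha_t^2\norm{\nabla f^s(\xx_t^s)_{\mid E_s}}^2$ pointwise, so averaging and applying the first bullet with $\epsilon$ replaced by $\epsilon/q^2$ produces the advertised $\mathcal O\bigl(q^4\sigma^2/\epsilon^2+q^2/\epsilon\bigr)\cdot LF_0$ rate. The main technical subtlety I expect is the careful case split for $\alpha_t$ and verifying that the noise bound transfers cleanly to the $E_s$-restriction of $g^s$; boundary iterations where the active stage $s$ switches do not affect the summation because the inequality is per-step and $\xx_t$ is redefined consistently at each transition.
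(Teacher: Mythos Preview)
Your proposal is correct and follows essentially the same route as the paper. The paper packages your case split into the single inequality $\langle \nabla f(\xx_t)_{\mid E_s}, \nabla f^s(\xx_t^s)_{\mid E_s}\rangle \ge \alpha_t \norm{\nabla f^s(\xx_t^s)_{\mid E_s}}^2$ and then uses $\gamma_t/\alpha_t=\gamma\le 1/L$ to get $(1-\tfrac{\gamma_t L}{2\alpha_t})\ge\tfrac12$, arriving at the identical per-step descent $\E f(\xx_{t+1})\le f(\xx_t)-\tfrac{\gamma}{2}\alpha_t^2\norm{\nabla f^s(\xx_t^s)_{\mid E_s}}^2+\tfrac{\gamma^2 L}{2}\sigma^2$; the telescoping, step-size substitution, and $q$-rescaling for the second bullet are exactly as you describe.
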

Theorem~\ref{thm:convergence} shows the convergence of the full model $\mathcal{M}$. The convergence is controlled by two factors, the alignment factor $\alpha_t$ and the norm discrepancy $q_t$. The former term measures the similarity between the corresponding parts of the gradients computed from the sub-models and the full model (note that $\alpha_t \equiv 1$ in the last phase of training on $f^S=f$). The latter term $q$ measures the magnitude discrepancy (in Figure~\ref{fig:gradient_ratio} we display the evolution of $q_t$ during training for one example task, note that $q_t=1$ in the last phase of training).
We would like to highlight that the convergence criterion in the first statement is lower bounded by the average gradient in the last phase of the training, $\frac{1}{2} \cdot \frac{1}{2T}\sum^{T-1}_{t=T/2}\norm{\nabla f(\mathbf{x}_t)}^2 \leq \frac{1}{T}\sum^{T-1}_{t=0}\alpha_t^2 \| \nabla f^s(\mathbf{x}_t^s)_{\mid E_s} \|^2$ (this is due to our choice of the length of the phases, with $T_S \geq T/2$). This means, that progressive training will provably require at most twice as many iterations but can reach the performance of SGD training on the full model with much cheaper per-iteration costs.

\section{Experiments}
\label{sec:exp}
\subsection{Setup}
\label{ssec:setup}
We describe the main implementation details in this section and provide supplementary details in Section~\ref{sec:append-implementation}.

\myparagraph{Datasets, tasks, and models} We consider four datasets:\ CIFAR-10~\citep{krizhevsky2009learning}, CIFAR-100~\citep{krizhevsky2009learning}, EMNIST ~\citep{cohen2017emnist}, and BraTS~\citep{menze2014multimodal, bakas2017advancing, bakas2018identifying}. The former three are for image classification, while the last one is a medical image dataset for tumor segmentation. We conduct centralized experiments for analyzing the basic properties of our method while considering practical applications in federated settings. For the centralized settings, we train VGG-16, VGG-19~\citep{simonyan2014very}, ResNet-18, and ResNet-152~\citep{he2016deep} on CIFAR-100 (100 clients, IID). For the federated settings, we train ConvNets on CIFAR-10 and EMNIST (3400 clients, non-IID), ResNet-18 on CIFAR-100 (500 clients, non-IID), and 3D-Unet~\citep{sheller2020multi_brain_fl} on the BraTS dataset (10 clients, IID). Note that we follow \citet{hsieh2020non} to replace batch normalization in ResNet-18 with group normalization. 

\myparagraph{Implementation} We implement all settings with Pytorch~\citep{pytorch}. In the centralized experiments, we implement models based on \citet{devries2017improved}, where we run all experiments for 200 epochs and decay the learning rates in \textit{\{60, 120, 160\}} epochs by a factor of 0.1. We additionally adopt warm-up~\citep{goyal2017accurate} and learning rate restart~\citep{loshchilov2016sgdr} in our method to better fit in progressive learning. For federated classification, we follow federated learning benchmarks in \citep{mcmahan2017communication, reddi2021adaptive_fl} to implement CIFAR-10, CIFAR-100, and EMNIST, respectively. For federated tumor segmentation, we follow \citet{sheller2020multi_brain_fl} for the settings and data splits. We run 1500 epochs for EMNIST, 2000 epochs for CIFAR-10, 3000 epochs for CIFAR-100, and 100 epochs for BraTS. We set $S=3$ for EMNIST and $S=4$ for all the other datasets and $T_s$ as the practical guideline described in Section~\ref{sec:method}. We adopt 5 and 25 warm-up epochs for federated EMNIST and federated CIFAR-100, respectively. We sample clients without replacement in the same communication round but with replacement across different rounds~\citep{reddi2021adaptive_fl}. In each round, we sample 10 out of 100 clients for CIFAR-10 (IID), 40 out of 500 clients for CIFAR-100 (non-IID), 68 out of 3400 clients for EMNIST (non-IID), and 3 out of 10 clients for BraTS (IID). The global models are synchronously updated in all tasks.

\subsection{Computation Efficiency}
We first analyze the computation efficiency of our method in the centralized setting (where all data is available on a single device) to study the effect of the progressive training in isolation before moving to the federated use cases. We average the 
outcomes over three random seeds and consider four architectures on CIFAR-100, including VGG-16, VGG-19, ResNet-18, and ResNet-152.
As shown in Table~\ref{table:centralized_clf}, our method performs comparably to the baselines (that train on the full model) after 200 epochs while consuming fewer floating-point operations per second (FLOPs) and training wall-clock time. \looseness=-1

To analyze the efficiency, we report the performance when consuming different levels of costs. Figure~\ref{fig:acc-flops} shows that our method (orange lines) consistently lies above end-to-end training on the full model (blue lines), meaning that our method consumes fewer computation resources to improve the models. Moreover, we visualize $98\%$, $99\%$, $99.95\%$, and the best of the performance of the converged baseline (analysis with a larger range is presented in Figure~\ref{fig:append-cen-accer}). Figure~\ref{fig:cen-accer} indicates that our method improves computation efficiency across architectures. In the best case, our method can accelerate training up to 7$\times$ faster when considering limited computation budgets. We also observe that VGG models improve more than ResNets. A possible reason might be that due to local supervision, sub-models enjoy larger gradients compared to end-to-end training, while it rarely benefits ResNets since skip-connections could partially avoid the problem.  \looseness=-1

\begin{table}[t]
\caption{Results on CIFAR-100 in the centralized setting.}
\vskip 0.1in
\label{table:centralized_clf}
\centering
{\footnotesize
\begin{tabular}{@{}p{1.3cm}cccc@{}}
\toprule
           & \multicolumn{2}{c}{Accuracy} & \multicolumn{2}{c}{Reduction} \\ \midrule
           & End-to-end      & Ours       & Walltime      & FLOPs         \\ \cmidrule(l){2-5} 
ResNet18  & \textbf{76.08$\pm$0.12}           & 75.84$\pm$0.28      & -24.75\%      & -14.60\%      \\
ResNet152 & 77.77$\pm$0.38           & \textbf{78.57$\pm$0.33}      & -22.75\%      & -19.68\%      \\
VGG16 & \textbf{71.79$\pm$0.15}           & 71.54$\pm$0.45      & -14.57\%      & -13.02\%      \\
VGG19 & 70.81$\pm$1.18           & \textbf{70.90$\pm$0.43}      & -22.10\%      & -14.43\%      \\ \bottomrule
\end{tabular}
}
\vskip -0.1in
\end{table}

\begin{figure}[t]
    \centering
    \includegraphics[width=1\linewidth]{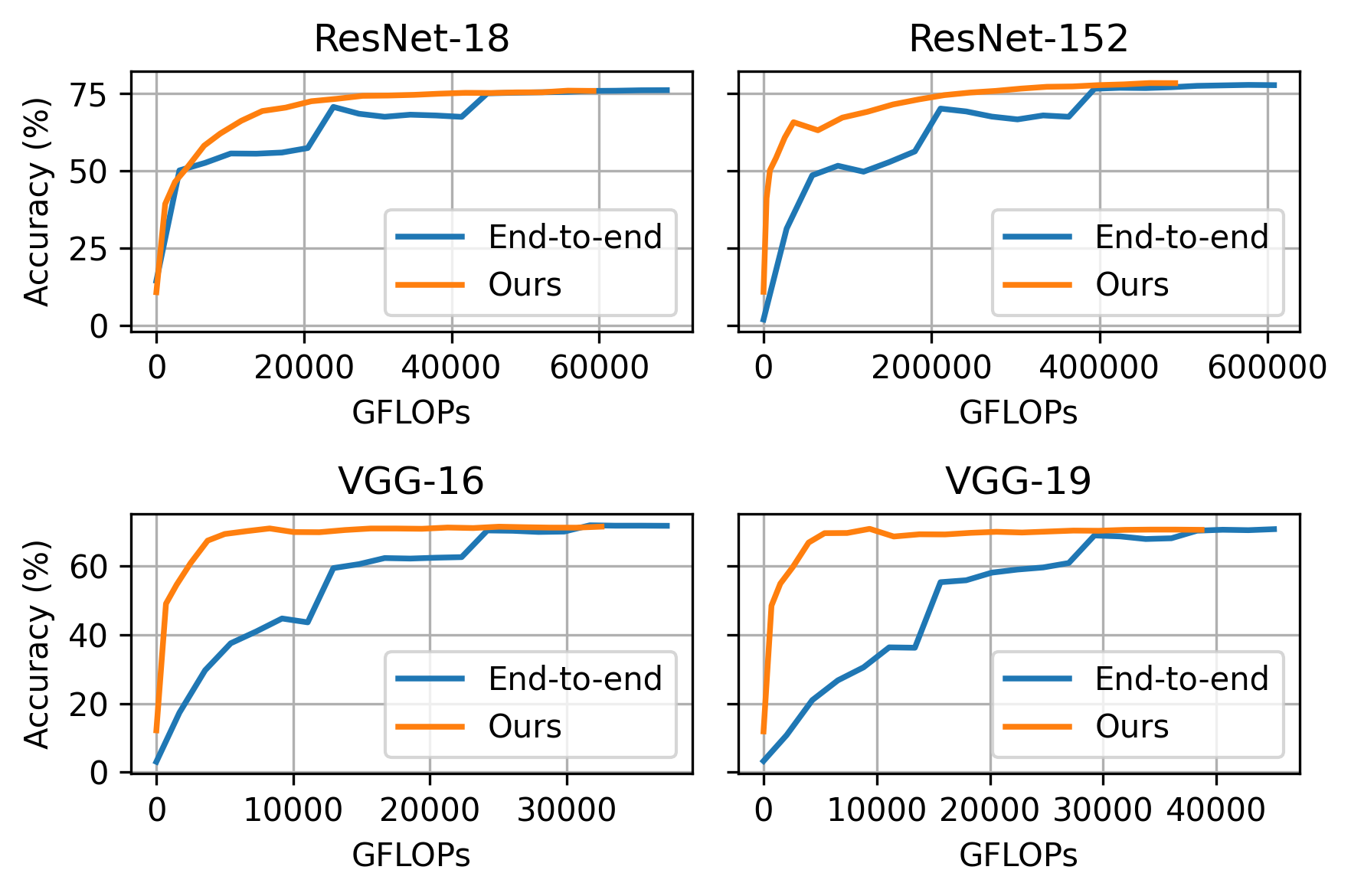}
    \vskip -0.12in
    \caption{Accuracy (\%) vs. GFLOPs on CIFAR-100 in the centralized setting.}
    \label{fig:acc-flops}
    \vskip -0.15in
\end{figure}

\begin{figure}[t]
    \centering
    \includegraphics[width=0.9\linewidth]{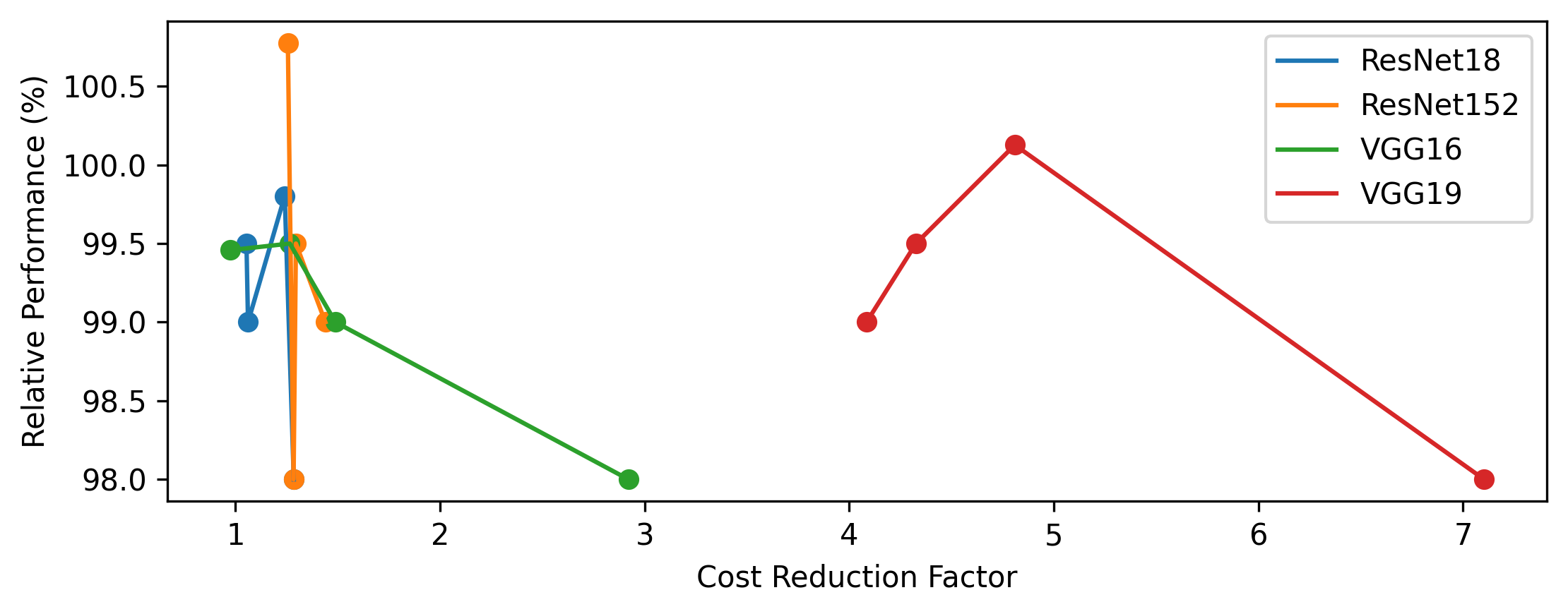}
    \vskip -0.12in
    \caption{Computation cost reduction at $98\%$, $99\%$, $99.95\%$, $\textit{best}$ compared to the baseline (training full models) performance in the centralized setting on CIFAR-100.} 
    \label{fig:cen-accer}
    \vskip -0.15in
\end{figure}
\begin{figure}[t]
    \centering
    \includegraphics[width=0.9\linewidth]{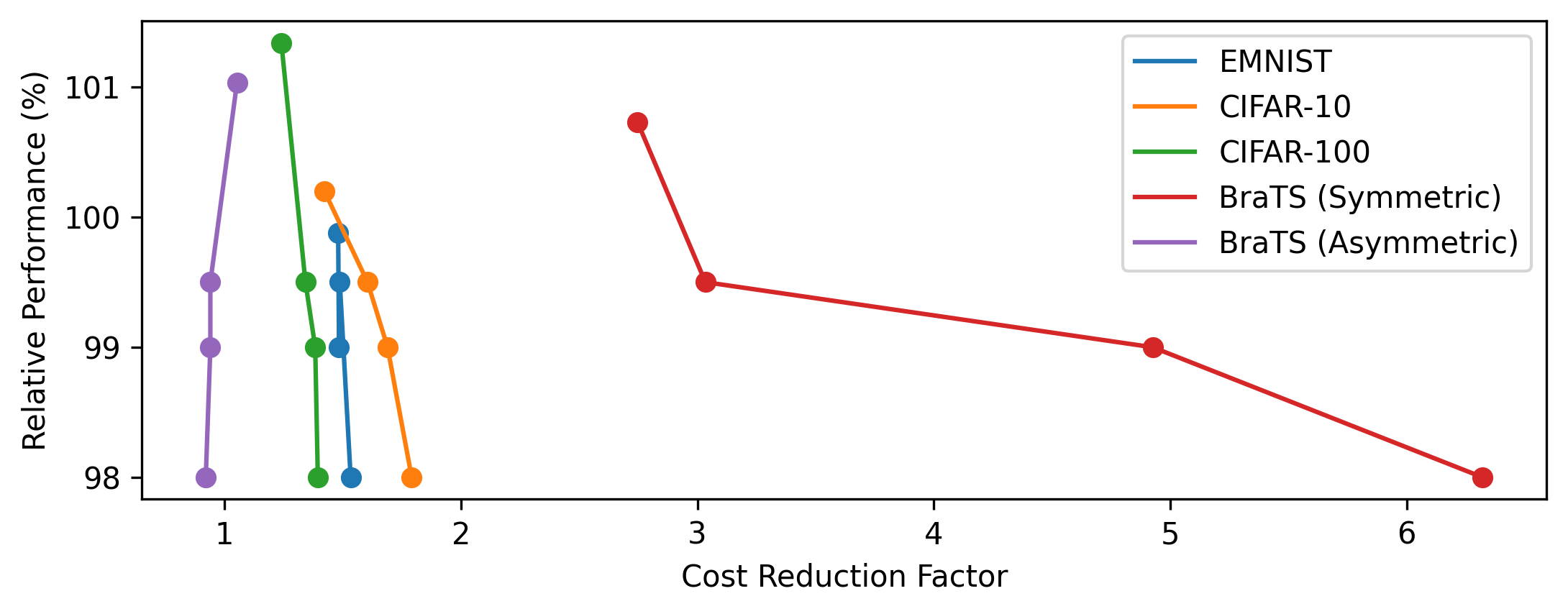}
    \vskip -0.1in
    \caption{Communication cost reduction at $98\%$, $99\%$, $99.95\%$, $\textit{best}$ compared to the baseline performance in the federated setting.} %
    \label{fig:fed-accer}
    \vskip -0.2in
\end{figure}

\begin{figure}[t]
\centering
\includegraphics[width=\linewidth]{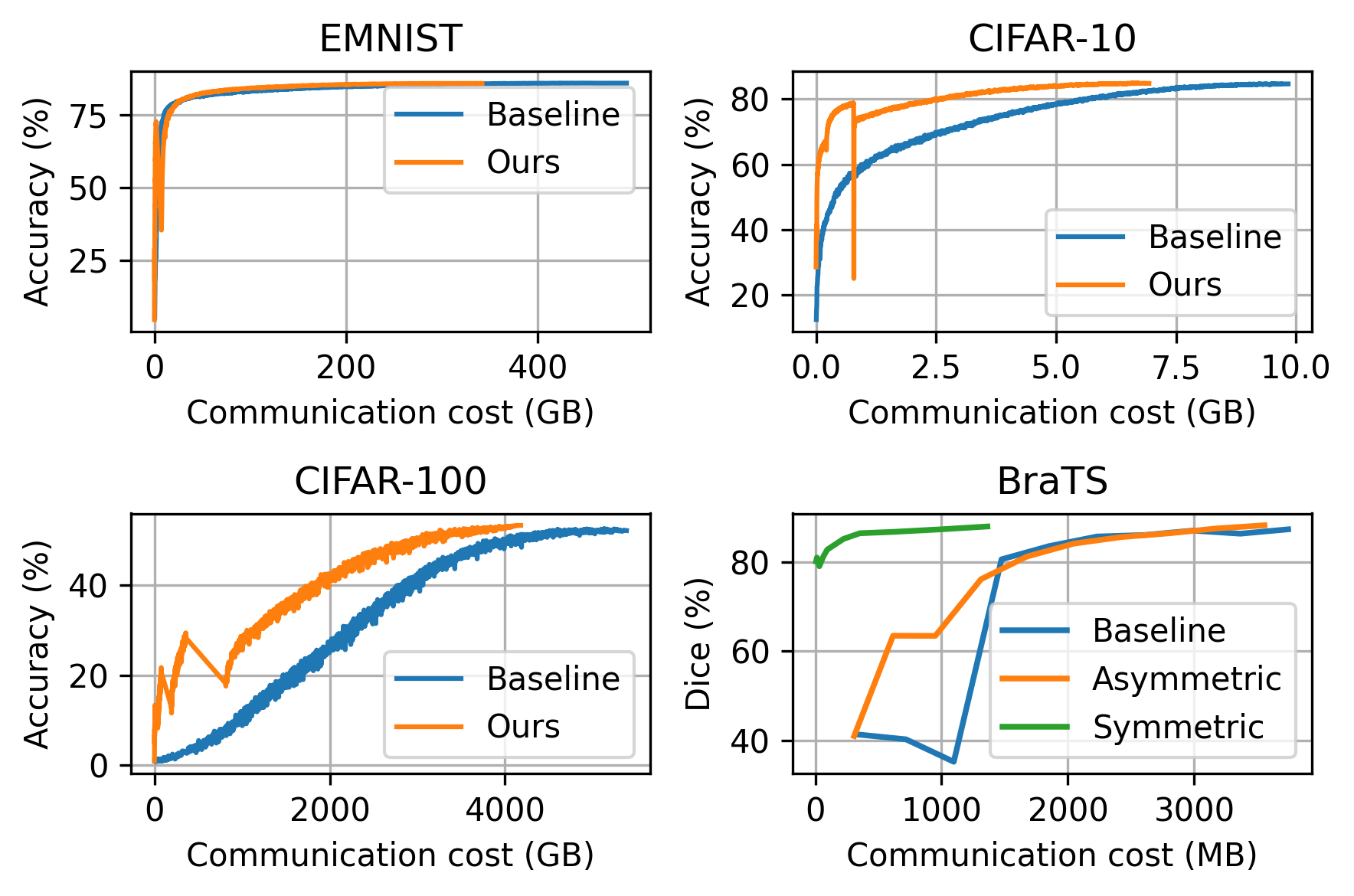}
\vskip -0.1in
\caption{Communication cost vs. Accuracy (\%) in federated settings on EMNIST (3400 clients, non-IID), CIFAR-10 (100 clients, IID), CIFAR-100 (500 clients, non-IID), and BraTS (10 clients, IID).}
\label{fig:fed-cost-acc}
\vskip -0.15in
\end{figure}

\subsection{Communication Efficiency}
\label{ssec:communication_efficiency}

We experiment in the federated setting to verify the communication efficiency of our method with FedAvg~\citep{mcmahan2017communication}. In particular, we consider classification tasks on three datasets, EMNIST, CIFAR-10, and CIFAR-100, and tumor segmentation tasks on the BraTS dataset. We follow the standard protocol as described in Section~\ref{ssec:setup} to train the models and average the results over three random seeds. Results in Table~\ref{table:fed-tasks} indicate that our method achieves comparable results on EMNIST and outperforms the baselines on all the other datasets. In addition, our method saves 20\% to 30\% two-way communication costs in classification and up to 63\% costs in segmentation. The result simultaneously confirms the effectiveness and efficiency of our method. We discuss the effect of different numbers of stages in Section~\ref{ssec:append_ablation_s}.\looseness=-1

\begin{table}[t]
\caption{Results in federated settings. We report accuracy (\%) for classification and Dice scores (\%) for segmentation, followed by cost reduction (CR) as compared to the baselines (end-to-end).}
\label{table:fed-tasks}
\vskip 0.1in
\centering
{\small
\begin{tabular}{@{}lccc@{}}
\toprule
 & Baseline & Ours & CR \\ \midrule
EMNIST & \textbf{85.75 $\pm$ 0.11} & 85.67 $\pm$ 0.06 & -29.49\% \\
CIFAR-10 & 84.67 $\pm$ 0.14 & \textbf{84.85 $\pm$ 0.30} & -29.70\% \\
CIFAR-100 & 52.08 $\pm$ 0.44 & \textbf{53.23 $\pm$ 0.09} & -22.90\% \\ \midrule
BraTS (Aym.) & 86.77 $\pm$ 0.45 & \textbf{87.66 $\pm$ 0.49} & -5.02\%  \\
BraTS (Sym.) & 86.77 $\pm$ 0.45& \textbf{87.96 $\pm$ 0.03} & -63.60 \% \\ \bottomrule
\end{tabular}
}
\vskip -0.1in
\end{table}

We compare the performance at different communication costs in Figure~\ref{fig:fed-cost-acc}. We observe that our method is communication-efficient over every cost budget, especially when the model parameters are not evenly distributed across sub-models. For instance, 3D-Unet has most of its parameters in the middle part of the model, making our \textit{Symmetric} update strategy extremely efficient. On the other hand, the \textit{Asymmetric} strategy shows marginal improvement since it starts from the heaviest portion of the model. The finding aligns with the motivation of progressive learning: learning from simpler models might facilitate training. We leave more analysis on segmentation in Section~\ref{ssec:append-vis-seg}. \looseness=-1

Lastly, we analyze the cost reduction when achieving $98\%$, $99\%$, $99.5\%$, and the best of the performance of the converged baseline. This experiment studies the behavior of our method when only granted limited budgets. Figure~\ref{fig:fed-accer} (and Figure~\ref{fig:append-fed-accer} in the appendix that displays a larger range) presents that except for the \textit{Asymmetric} strategy, our method improves communication efficiency across all datasets. 
In particular, it achieves practicable performance with 2x fewer costs in classification and up to 6.5x fewer costs in tumor segmentation. We also observe that the communication efficiency improves more when considering lower budgets. This property benefits when the time and communication budgets are limited~\citep{mcmahan2017communication}. \looseness=-1

\begin{table}[t]
\centering
\caption{Results of ProgFed with FedAvg, FedProx, and FedAdam on CIFAR-100 in the federated setting.}
\label{table:advanced_opt}
\vskip 0.1in
{\small
\begin{tabular}{@{}lccc@{}}
\toprule
\multicolumn{4}{c}{EMNIST} \\ \midrule
 & FedAvg & FedProx & FedAdam \\
End-to-end & \textbf{85.75} & \textbf{86.36} & \textbf{86.53} \\
FedProg (S=4) & 85.67 & 86.08 & 86.13 \\ \midrule
\multicolumn{4}{c}{CIFAR-100} \\ \midrule
 & FedAvg & FedProx & FedAdam \\
End-to-end & 52.08 & 53.25 & 56.21 \\
FedProg (S=4) & \textbf{53.23} & \textbf{54.28} & \textbf{60.55} \\ \bottomrule
\end{tabular}
}
\vskip -0.1in
\end{table}

\begin{table*}[tbp]
\caption{Federated ResNet-18 on CIFAR-100 with compression. LQ-X denotes linear quantization followed by used bits representing gradients, and SP-X denotes sparsification followed by the percentage of kept gradients (see Table~\ref{table:append-compress} for standard deviations).}
\label{table:compress}
\vskip 0.1in
\centering
{\small
\begin{tabular}{@{}lcccccccc@{}}
\toprule
 & Float & LQ-8 & LQ-4 & LQ-2 & SP-25 & SP-10 & \begin{tabular}[c]{@{}c@{}}LQ-8\\ +SP-25\end{tabular} & \begin{tabular}[c]{@{}c@{}}LQ-8\\ +SP-10\end{tabular} \\ \midrule
 & \multicolumn{8}{c}{Accuracy (\%)} \\ \cmidrule(l){2-9} 
Baseline & 52.08 & 49.40 & 49.55 & 47.26 & 51.23 & 51.79 & 49.67 & 50.25 \\
Ours & \textbf{53.23} & \textbf{53.07} & \textbf{52.32} & \textbf{52.87} & \textbf{52.00} & \textbf{51.86} & \textbf{52.19} & \textbf{52.24} \\ \midrule
 & \multicolumn{8}{c}{Compression Cost (\%)} \\ \cmidrule(l){2-9} 
Baseline & 100 & 25.00 & 12.50 & 6.25 & 25.00 & 10.00 & 6.25 & 2.50 \\
Ours & \textbf{77.10} & \textbf{19.28} & \textbf{9.64} & \textbf{4.82} & \textbf{19.28} & \textbf{7.71} & \textbf{4.82} & \textbf{1.93} \\ \bottomrule
\end{tabular}
}
\vskip -0.1in
\end{table*}

\begin{table*}[t]
\caption{Comparison between update strategies on CIFAR-100 with ResNet-18 in the centralized setting.}
\label{table:update_strategy}
\vskip 0.1in
\centering
{\small
\begin{tabular}{@{}lcccccc@{}}
\toprule
         & Baseline & Ours     & Layerwise & Partial  & Mixed     & Random   \\ \midrule
Accuracy (\%) & \textbf{76.08$\pm$0.12}    & 75.84$\pm$0.28    & 72.40$\pm$0.16     & 74.70$\pm$0.04    & 75.04$\pm$1.26     & 74.38$\pm$0.97    \\
Cost   & 1 & \textbf{0.86} & 1  & 1 & $\approx1$ & 0.88 \\ \bottomrule
\end{tabular}
}
\vskip -0.1in
\end{table*}

\begin{figure}[tbp]
    \centering
    \setlength{\tabcolsep}{1pt}
    \begin{tabular}{c}
        \includegraphics[width=0.9\linewidth]{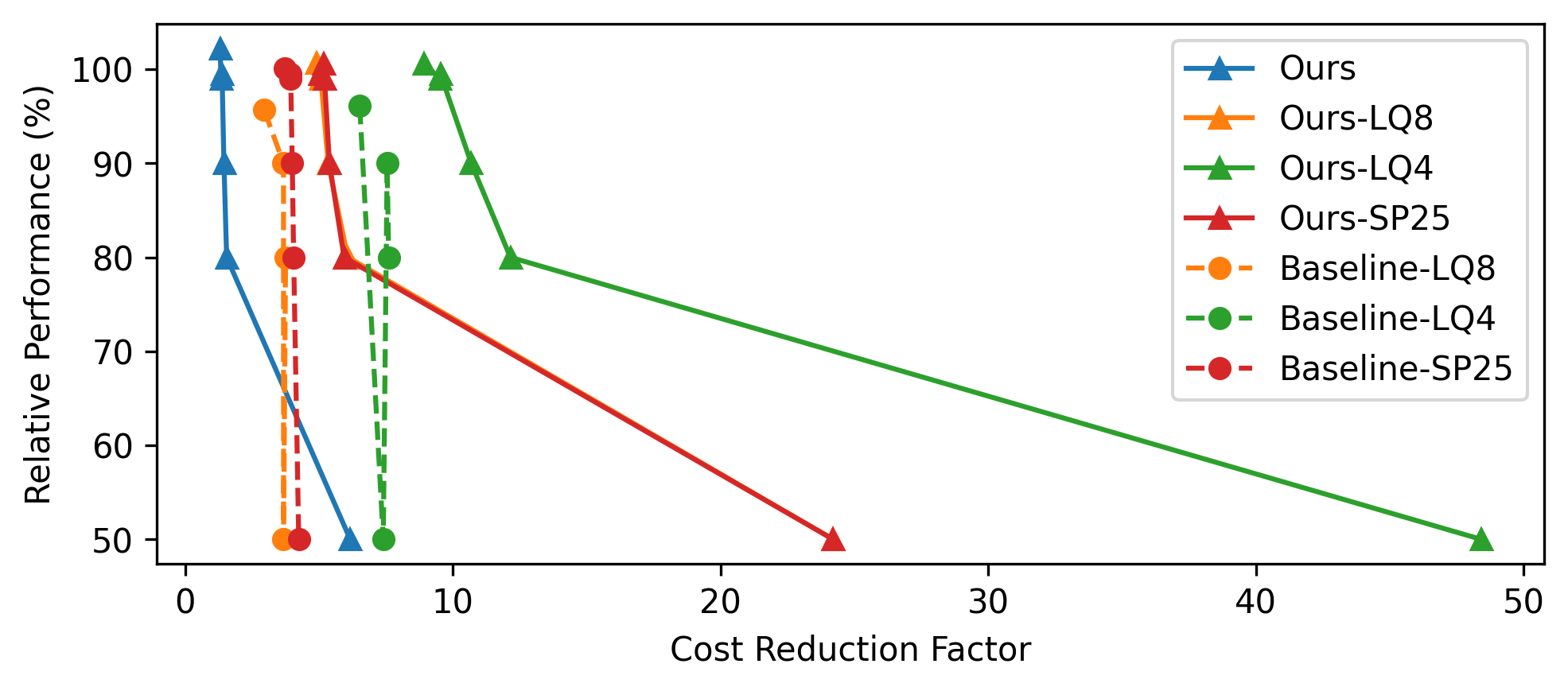} \\             
        (a) Moderate compression  \\ \includegraphics[width=0.9\linewidth]{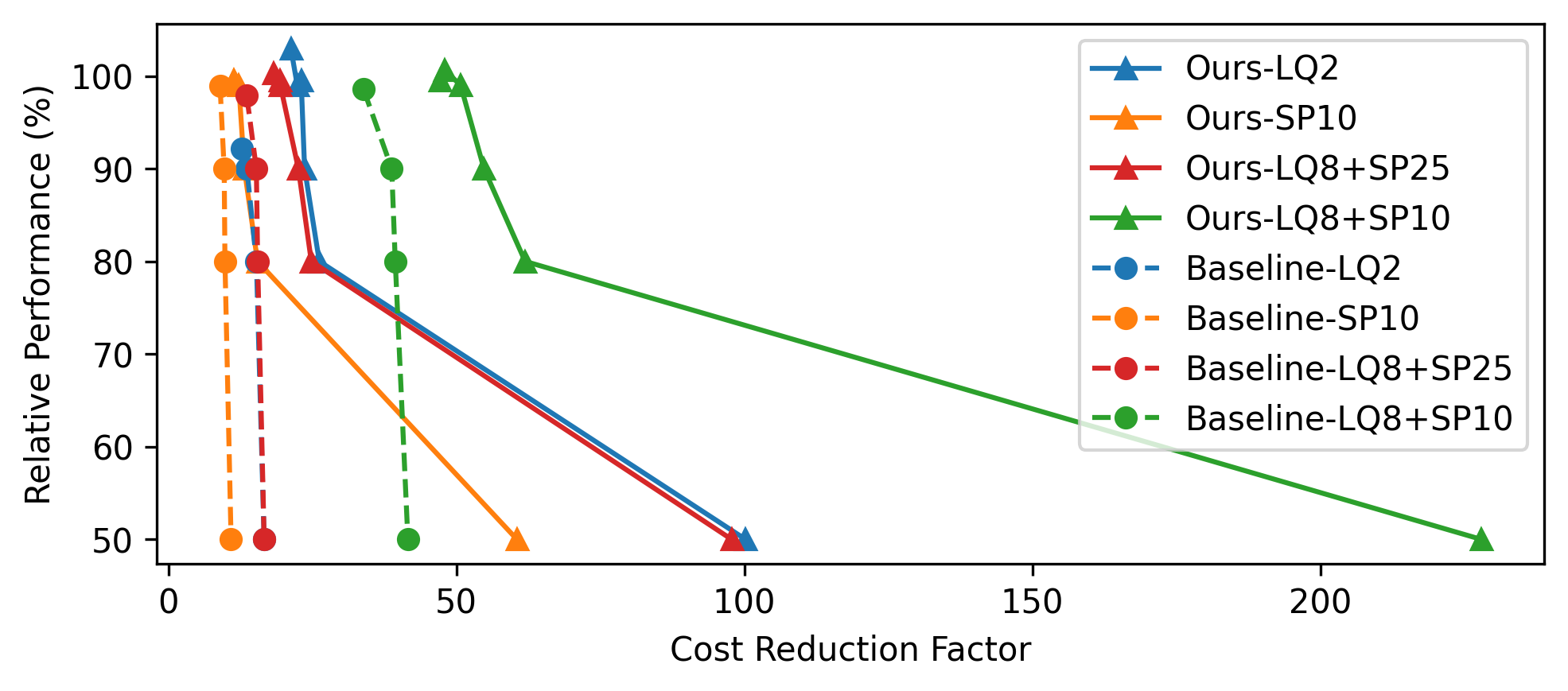} \\
       (b) Intensive compression 
    \end{tabular}
    \caption{Relative performance vs. communication cost reduction with federated ResNet-18 on CIFAR-100 with (a) modest compression and (b) intensive compression.}
    \label{fig:compression}
    \vskip -0.15in
\end{figure}

\subsection{Compatibility}
\label{ssec:compatibility}
\myparagraph{Advanced optimization} We show that ProgFed can generalize to federated optimizations beyond FedAvg, including FedProx~\citep{li2020federated} and FedAdam~\citep{reddi2021adaptive_fl}, on CIFAR-100 and EMNIST. FedProx mitigates the non-IID problem by penalizing the L2 distance between the updated client models and the global model. On the other hand, FedAdam approaches the problem by adopting an Adam optimizer on the server side. These methods impose client-side and server-side regularization on top of FedAvg. 

Results in Table~\ref{table:advanced_opt} show that our method works seamlessly with FedProx and FedAdam. We first observe that the performance improves on both datasets when applying FedProx and FedAdam. Similar to the result in Table~\ref{table:fed-tasks}, our method significantly outperforms the baseline on CIFAR-100 while performing comparably on EMNIST. The result verifies that ProgFed can be readily applied to advanced federated optimizations.

\myparagraph{Compression} We show that our method complements classical compression techniques including quantization and sparsification. We train several ResNet-18 on CIFAR-100 in the federated setting and apply linear quantization and sparsification following~\citet{konevcny2016federated}. Specifically, we consider 8 bits, 4 bits, and 2 bits for quantization (denoted by LQ-X), 25\% and 10\% for sparsification (denoted by SP-X), and their combinations. Table~\ref{table:compress} demonstrates the results. Our method clearly outperforms the baselines in all settings. It indicates that our method is more robust against compression errors, compatible with classical techniques, and thus permits a higher compression ratio. \looseness=-1

In addition, we visualize 50\%, 80\%, 90\%, 99\%, 99.5\%, and the best of the performance of the converged baseline against communication cost reduction in Figure~\ref{fig:compression}. We observe that our method is more efficient across all percentages in every pair (Ours vs.\ Baseline, plotted in the same color). Besides, the baseline fails to achieve comparable performance in many settings, e.g., the ones with quantization, while our method retains comparable performance even with high compression ratios.
Interestingly, even with additional compression, our method still facilitates learning at earlier stages. For example, Ours-LQ8+SP25 achieves comparable performance around 50x faster than the baseline, 60x faster to achieve 80\%, and more than 200x faster to achieve 50\% of performance. Overall, these properties grant our method to adequately approach limited network bandwidth and open up the possibility of more advanced compression schemes. \looseness=-1

\subsection{Analysis of ProgFed}
\label{ssec:analysis_progfed}
\myparagraph{Effect of norm discrepancy} As discussed in Section~\ref{ssec:thm}, the convergence rate of the full model is controlled by norm discrepancy, namely $q$. As $q_t$ approaches 1, the convergence rate will be closer to the convergence speed of the sub-models. We empirically evaluate the norm discrepancy on CIFAR-100 with ResNet-18 in the centralized setting. 
Figure~\ref{fig:gradient_ratio} shows that the norm discrepancy decreases as the sub-models gradually recover the full model. It suggests that spending too much time on earlier stages may hurt the convergence speed while offering a higher compression ratio. This outlines the trade-off between communication and training efficiency.  

\begin{figure}[tbp]
    \centering
    \includegraphics[width=\linewidth]{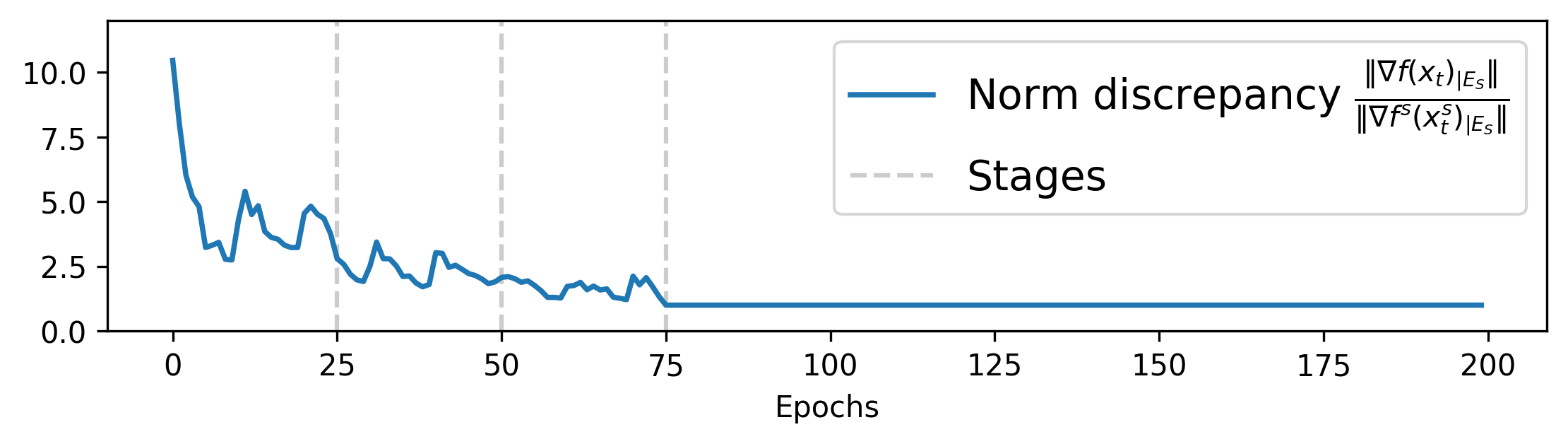}
    \vskip -0.15in
    \caption{Norm discrepancy.}
    \label{fig:gradient_ratio}
    \vskip -0.2in
\end{figure}

\myparagraph{Comparison between update strategies} As described in Section~\ref{sec:method}, ProgFed progressively trains the network from the shallowest sub-model $\mathcal{M}^1$ to the full model $\mathcal{M}$. We verify our update strategy by comparing it with various baselines in the centralized setting. \qcr{Baseline}: end-to-end training; \qcr{Layerwise} only updates the latest layer $E_i$ while still passing the input through the whole model $\mathcal{M}$; \qcr{Partial} partially updates $E_s$ but acquires supervision from the last head $G_S$; \qcr{Mixed} combines \qcr{Partial} and \qcr{Ours}, trained on supervision from both $G_i$ and $G_S$; \qcr{Random} randomly chooses a sub-model $\mathcal{M}^s$ to update, rather than follows progressive learning. 

Table~\ref{table:update_strategy} presents the performance and the computation cost ratio. We make several crucial observations. (1) \qcr{Ours} and \qcr{Random} do not pass the input through the whole network, making them consume fewer computation costs. (2) \qcr{Layerwise} greedily trains the network but achieves the worst performance, which highlights the importance of end-to-end fine-tuning. (3) \qcr{Ours} outperforms \qcr{Random} in both costs and accuracy, verifying the necessity of progressive learning. We also note that our method does not require additional memory space (compared to \qcr{Random}) and is easy to implement.

\huipo{\section{Discussion}
\label{sec:discussion}
There are two important hyperparameters $S$ and $\{T_s\}_{s=1}^{S}$ in ProgFed (Sec.~\ref{ssec:proposed_method}). We note that hyperparameter selection in FL remains an open problem and could severely affect performance. For instance, \citet{reddi2021adaptive_fl} show that FL models are sensitive to learning rates. However, we show in Section~\ref{ssec:append_ablation_s} that ProgFed works smoothly with various $s$ at the cost of slightly more epochs while still taking fewer costs to match the baseline performance. This aligns with Theorem~\ref{thm:convergence} that ProgFed may take more epochs to converge but consume much fewer per-iteration costs. Lastly, we note that Theorem~\ref{thm:convergence} is general for partial model updates. We leave incorporating structural updates and the non-IID data assumption for future work. 

In addition to the efficiency and efficacy of ProgFed, we observe that the area under the curves of our method in Figure~\ref{fig:acc-flops} and \ref{fig:fed-cost-acc} are always larger than the end-to-end baselines. It suggests that our approach can provide more practical utility at all times. It aligns with the notion \emph{anytime learning}~\citep{caccia2021anytime}, where the models are expected to provide the best utility at any time during training. This feature is favorable in practice since users may access the system at any time of training, and ProgFed demonstrates great potential to implement such systems.
}
\section{Conclusion}
\label{sec:conclusion}
Beyond prior work on expressing models in compact formats, we show a novel approach to modifying the training pipeline to reduce the training costs. We propose ProgFed and show that progressive learning can be seamlessly applied to federated learning for communication and computation cost reduction. Extensive results on different architectures from small CNNs to U-Net and different tasks from simple classification to medical image segmentation show that our method is communication- and computation-efficient, especially when the training budgets are limited. Interestingly, we found that a progressive learning scheme has even led to improved performance in vanilla learning and more robust results when learning is perturbed e.g. in the case of gradient compression, which highlights progressive learning as a promising technique in itself with future application domains such as privacy-preserving learning, advanced compression schemes, and strong anytime-learning performance.

\section*{Acknowledgment}
This work was partially funded by the Helmholtz Association within the project ”Trustworthy Federated
Data Analytics (TFDA)” (ZT-I-OO1 4) and supported by the Helmholtz Association's Initiative and Networking Fund on the HAICORE@FZJ partition. 
\bibliography{example_paper}
\bibliographystyle{icml2022}

\newpage
\appendix
\onecolumn
\section{Appendix}
\subsection{Proof of Theorem~\ref{thm:convergence}}
In this section we prove Theorem~\ref{thm:convergence}. The proof builds on~\citep{mohtashami2021simultaneous} that considered training of subnetworks, but not the progressive learning case.

\begin{lemma}
\label{lemma_onestep}
Let $\xx_t$ denote the weights of the full model, and $\xx_t^s$ the weights of the model that is active in iteration $t$. Note that it holds $\xx_{t \mid E_s} = \xx^s_{t \mid E_s}$ as per the definition in the main text. It holds,
\begin{equation}
    \mathds{E}f(\mathbf{x}_{t+1}) \leq f(\mathbf{x}_t) - \frac{\gamma}{2}\alpha^2_t\norm{\nabla f^s(\mathbf{x}_t^s)_{\mid E_s}}^2+\frac{\gamma^2L}{2}\sigma^2
\end{equation}
\end{lemma}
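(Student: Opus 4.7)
The plan is to apply $L$-smoothness of $f$ between $\mathbf{x}_t$ and $\mathbf{x}_{t+1}$, exploiting the crucial fact that the update vector $\mathbf{x}_{t+1} - \mathbf{x}_t$ is supported only on the coordinates corresponding to $E_s$ (i.e., the blocks $E_1, \dots, E_s$). By the definition of $\mathbf{x}_t$ recalled just before the lemma, the complement $E_s^\complement$ is frozen at the initialization $\mathbf{x}_0$, and the auxiliary head $G_s$ of the active sub-model lives entirely outside $\mathbf{x}_t$, so no spurious terms appear. Concretely, $(\mathbf{x}_{t+1}-\mathbf{x}_t)_{\mid E_s} = -\gamma_t\, g^s(\mathbf{x}_t^s)_{\mid E_s}$ and $(\mathbf{x}_{t+1}-\mathbf{x}_t)_{\mid E_s^\complement}=\vzero$, so Assumption~\ref{asmp_lsmooth} yields
$$f(\mathbf{x}_{t+1}) \leq f(\mathbf{x}_t) - \gamma_t \langle \nabla f(\mathbf{x}_t)_{\mid E_s},\, g^s(\mathbf{x}_t^s)_{\mid E_s}\rangle + \frac{L\gamma_t^2}{2}\norm{g^s(\mathbf{x}_t^s)_{\mid E_s}}^2.$$

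Next, I would take the expectation conditional on the history through iteration $t$. Unbiasedness of $g^s$ gives $\mathds{E}[g^s(\mathbf{x}_t^s)_{\mid E_s}] = \nabla f^s(\mathbf{x}_t^s)_{\mid E_s}$, while Assumption~\ref{asmp_msbound}, combined with the fact that coordinate projection is a contraction, yields $\mathds{E}\norm{g^s(\mathbf{x}_t^s)_{\mid E_s}}^2 \leq \norm{\nabla f^s(\mathbf{x}_t^s)_{\mid E_s}}^2 + \sigma^2$. Substituting leaves a deterministic descent-type inequality with a cross term $-\gamma_t\langle\nabla f(\mathbf{x}_t)_{\mid E_s},\nabla f^s(\mathbf{x}_t^s)_{\mid E_s}\rangle$, a squared-gradient term $\frac{L\gamma_t^2}{2}\norm{\nabla f^s(\mathbf{x}_t^s)_{\mid E_s}}^2$, and a noise term $\frac{L\gamma_t^2}{2}\sigma^2$.

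The final step is to collapse the cross and squared-gradient contributions using the two structural features of the step size. Since $\gamma_t = \alpha_t\gamma$ with $\gamma\leq 1/L$, we have $L\gamma_t^2 \leq \gamma \alpha_t^2$, so the squared-gradient term is at most $\frac{\gamma\alpha_t^2}{2}\norm{\nabla f^s(\mathbf{x}_t^s)_{\mid E_s}}^2$. The definition of $\alpha_t$ (the $\min$ in Theorem~\ref{thm:convergence}) implies $\alpha_t \norm{\nabla f^s(\mathbf{x}_t^s)_{\mid E_s}}^2 \leq \langle\nabla f(\mathbf{x}_t)_{\mid E_s},\nabla f^s(\mathbf{x}_t^s)_{\mid E_s}\rangle$, so the cross term is bounded above by $-\gamma\alpha_t^2\norm{\nabla f^s(\mathbf{x}_t^s)_{\mid E_s}}^2$. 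Adding these produces $-\frac{\gamma}{2}\alpha_t^2\norm{\nabla f^s(\mathbf{x}_t^s)_{\mid E_s}}^2$, while $\frac{L\gamma_t^2}{2}\sigma^2\leq\frac{L\gamma^2}{2}\sigma^2$ handles the noise, which is exactly the claimed bound.

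The main obstacle is the bookkeeping across three distinct parameter vectors --- the full-model $\mathbf{x}_t$, the active sub-model $\mathbf{x}_t^s$ with its own head $G_s$, and the coordinate-restricted update --- and verifying that no contribution from the frozen blocks $E_{s+1},\dots,E_S$ or from the head $G_s$ ever leaks into the inner products or norms. Once the support of the update is correctly identified, the remainder reduces to the standard $L$-smooth SGD descent argument, with $\alpha_t$ playing the role of a self-scaling factor that converts a possibly misaligned sub-model gradient into a genuine descent direction for $f$.
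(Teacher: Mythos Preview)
Your proposal is correct and follows essentially the same route as the paper: apply $L$-smoothness restricted to the $E_s$ coordinates, take expectation using unbiasedness and the variance bound (with projection as a contraction), then use the definition of $\alpha_t$ together with $\gamma_t=\alpha_t\gamma\leq\alpha_t/L$ to collapse the cross and squared-gradient terms. The only cosmetic difference is that the paper groups the two deterministic terms as $-\gamma_t\alpha_t\bigl(1-\tfrac{\gamma_t L}{2\alpha_t}\bigr)\norm{\nabla f^s(\mathbf{x}_t^s)_{\mid E_s}}^2$ before invoking $\gamma\leq 1/L$, whereas you bound each separately; the arithmetic is identical.
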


\begin{proof}
Let's abbreviate $\mathbf{g}^s_t = g^s(\xx_t^s)$.
By the update equation $\xx_{t+1}^s = \xx_t^s - \gamma_t \mathbf{g}^s_t$ it holds $\xx_{t+1 \mid E_s} = \xx_{t \mid E_s} - \gamma_t \mathbf{g}^s_{t \mid E_s}$. With the $L$-smoothness assumption and the definition of $\alpha_t$,
\begin{equation*}
\begin{split}
    \mathds{E}f(\mathbf{x}_{t+1}) &\leq f(\mathbf{x}_t) - \gamma_t\langle \nabla f(\mathbf{x}_t)_{\mid E_s}, \mathds{E}[\mathbf{g}^s_t]_{\mid E_s}\rangle + \frac{\gamma^2_tL}{2}\mathds{E}\norm{\mathbf{g}^s_{t\mid E_s}}^2 \\
    & = f(\mathbf{x}_t) - \gamma_t\langle \nabla f(\mathbf{x}_t)_{\mid E_s}, \mathds{E}[\mathbf{g}^s_t]_{\mid E_s}\rangle + \frac{\gamma^2_tL}{2}\mathds{E}(\norm{\mathbf{g}^s_{t \mid E_s} -\mathds{E}\mathbf{g}^s_{t \mid E_s}}^2+\norm{\mathds{E}\mathbf{g}^s_{t \mid E_s}}^2) \\
    & \leq f(\mathbf{x}_t) - \gamma_t\langle \nabla f(\mathbf{x}_t)_{\mid E_s}, \nabla f^s(\xx_t^s)_{\mid E_s} \rangle + \frac{\gamma^2_tL}{2}\mathds{E}(\norm{\mathbf{g}^s-\mathds{E}\mathbf{g}^s_t}^2 +\norm{\nabla f^s(\mathbf{x}_t^s)_{\mid E_s}}^2) \\
    & \leq f(\mathbf{x}_t) - \gamma_t\langle \nabla f(\mathbf{x}_t)_{\mid E_s}, \nabla f^s(\xx_t^s)_{\mid E_s} \rangle + \frac{\gamma^2_tL}{2}\norm{\nabla f^s(\mathbf{x}_t^s)_{\mid E_s}}^2+\frac{\gamma^2_{t}L}{2}\sigma^2 \\
    & \leq f(\mathbf{x}_t) - \gamma_t\alpha_t(1-\frac{\gamma_t}{2\alpha_t}L)\norm{\nabla f^s(\mathbf{x}_t^s)_{\mid E_s}}^2 +\frac{\gamma^2_{t}L}{2}\sigma^2 \\
    & \leq f(\mathbf{x}_t) - \frac{\gamma_t}{2}\alpha_t\norm{\nabla f^s(\mathbf{x}_t^s)_{\mid E_s}}^2 +\frac{\gamma^2_{t}L}{2}\sigma^2 \\
    & \leq f(\mathbf{x}_t) - \frac{\gamma}{2}\alpha_t^2\norm{ \nabla f^s(\mathbf{x}_t^s)_{\mid E_s}}^2 +\frac{\gamma^2 L}{2}\sigma^2
\end{split}
\end{equation*}
Where in the last equation we used the facts that $\alpha_t \leq 1$ and $\gamma_t=\alpha_t\gamma$. 
\end{proof}

We now prove Theorem~\ref{thm:convergence}.

\begin{proof}
We first define $F_t \vcentcolon= \mathds{E}f(\mathbf{x}_t) - (\min_{\xx} f(\xx))$. By rearranging Lemma~\ref{lemma_onestep}, we have
\begin{equation}
    \frac{1}{2}\mathds{E}\alpha_t^2\norm{\nabla f^s(\mathbf{x}_t^s)_{\mid E_s}}^2 \leq \frac{F_t-F_{t+1}}{\gamma} + \frac{\gamma L}{2}\sigma^2.
\end{equation}
Next, with telescoping summation, we have
\begin{equation}
\label{eq_fixed_mask}
    \frac{1}{T}\sum_{t=0}^{T-1}\mathds{E}\alpha_t^2\norm{\nabla f^s(\mathbf{x}_t^s)_{\mid E_s}}^2 \leq \frac{2(F_0-F_{T-1})}{T\gamma} + \gamma L\sigma^2 \leq \frac{2F_0}{T\gamma} + \gamma L\sigma^2
\end{equation}

We now can prove the first of part Theorem~\ref{thm:convergence} by setting the step size $\gamma$ to be $\mathcal{O}(\min\{ \frac{1}{L}, (\frac{F_0}{\sigma^2T})^\frac{1}{2}\}$ as in \citep{mohtashami2021simultaneous}.

To prove the convergence of the model of interest (the second part),
\begin{equation}
\begin{split}
    \frac{1}{T}\sum_{t=0}^{T-1}\norm{\nabla f(\mathbf{x}_t)}^2 &= \frac{1}{T}\sum_{t=0}^{T-1}\frac{\norm{\nabla f(\mathbf{x}_t)}^2}{\alpha^2_t\norm{\nabla f^s(\mathbf{x}_t^s)_{\mid E_s}}^2} \alpha_t^2 \norm{\nabla f^s(\mathbf{x}^s_t)_{\mid E_s}}^2 \\
    &= \frac{1}{T}\sum_{t=0}^{T-1}q_t^2\alpha^2_t\norm{\nabla f^s(\mathbf{x}_t^s)_{\mid E_s}}^2 \leq q^2\frac{1}{T}\sum_{t=0}^{T-1}\alpha^2_t\norm{\nabla f^s(\mathbf{x}_t^s)_{\mid E_s}}^2 
\end{split}
\end{equation}
where 
\begin{equation}
    q_t = \frac{\norm{\nabla f(\mathbf{x}_t)}}{\alpha_t\norm{\nabla f^s(\mathbf{x}_t^s)_{\mid E_s}}} \qquad\text{and} \qquad
    q = \max_{t\in[T]} q_t.
\end{equation}
By definition $q \geq q_t$ for all $t \in [T]$, we reach the last inequality and combine it with the first part of the theorem. 
\begin{equation}
    \frac{1}{T}\sum_{t=0}^{T-1}\alpha^2_t\norm{\nabla f^s(\mathbf{x}_t)}^2 \leq \frac{\epsilon}{q^2} 
\end{equation}
\end{proof}

Using $\frac{\epsilon}{q^2}$ as the new threshold, we immediately prove the second part.

\section{Implementation details}
\label{sec:append-implementation}
We describe details of the datasets used in Section~\ref{sec:exp} and present the hyper-parameters in Table~\ref{table:append_fed_setting}.

\begin{table*}[tb]
\caption{Parameters for federated experiments}
\label{table:append_fed_setting}
\vskip 0.1in
\centering
\begin{tabular}{@{}lcccc@{}}
\toprule
Dataset & \#clients & \#clients\_per\_epoch & batch\_size & \#epochs \\ \cmidrule(l){2-5} 
EMNIST & 3400 & 68 & 20 & 1500 \\
CIFAR-10 & 100 & 10 & 50 & 2000 \\
CIFAR-100 & 500 & 40 & 20 & 3000 \\
BraTS & 10 & 10 & 3 & 100 \\ \midrule
 & \#epoch\_per\_client & \#stages ($S$) & $T_s$ & \#epochs\_for\_warmup \\ \cmidrule(l){2-5} 
EMNIST & 1 & 3 & 250 & 5 \\
CIFAR-10 & 5 & 4 & 250 & 0 \\
CIFAR-100 & 1 & 4 & 375 & 25 \\
BraTS & 3 & 4 & 25 & 0 \\ \bottomrule
\end{tabular}
\vskip -0.1in
\end{table*}

\myparagraph{CIFAR-10}
We conduct experiments on CIFAR-10 datasets for federated learning, following the setup of previous work~\citep{mcmahan2017communication}. The dataset is divided into 100 clients randomly, namely iid distributions for every client. We adopt the same CNN architecture with 122,570 parameters. 

\myparagraph{CIFAR-100} 
We follow the federated learning benchmark of CIFAR-100 proposed in~\citep{reddi2021adaptive_fl} to conduct the experiments on CIFAR-100. We use ResNet-18/-152 (batch norm are replaced with group norm~\citep{hsieh2020non}) and VGG-16/-19 in the centralized setting, while only considering ResNet-18 in the federated experiments. This setup allows us to evaluate the federated learning systems on non-IID distributions, where we use the splits as suggested in~\citep{reddi2021adaptive_fl}.

\myparagraph{EMNIST}
We follow the benchmark setting in~\citep{reddi2021adaptive_fl} to experiment. There are 3,400 clients and  671,585 training examples distributed in a non-iid fashion. The models are eventually evaluated on 77,483 examples, resulting in a challenging task.

\myparagraph{BraTS}
In addition to image classification, we conduct experiments on brain tumor segmentation based on~\citep{sheller2020multi_brain_fl}. We train a 3D-Unet on the BraTS2018 dataset, which includes 285 MRI scans annotated by five classes of labels. The network has 9,451,567 parameters. The training set is randomly partitioned into ten clients. All clients participate in every training round and locally train their models for three local epochs. This setting matches the practical medical applications. Institutions often own relatively stable network conditions, and the data are rare and of high resolution.

\myparagraph{Architectures}
ConvNets for EMNIST and MNIST consist of two convolution layers, termed Conv1 and Conv2, followed by two fully connected layers, termed FC1 and FC2. To apply progressive learning with $S=3$, we set Conv1, Conv2, FC1 to be the three stages, namely $E_i$, and FC2 to the final head, namely $G_S$. As for VGGs, we divide the whole networks into five components according to the max-pooling layers. We combine the first two to be $E_1$ and set the others to be the remaining $E_i$ under the setting $S=4$. To apply ProgFed to ResNets~\cite{he2016deep}, we first replace the batch normalization layers with group normalization. By convention, ResNets have five convolution components, i.e. Conv1, Conv2\_x, Conv3\_x, Conv4\_x, and Conv5\_x. We combine Conv1 and Conv2\_x to be $E_1$ and all the other components to be the remaining $E_i$. It thus matches $S=4$ in our setting.

\section{More Results}
\label{sec:append-more-results}

\subsection{Comparison between update strategies}
As described in Section~\ref{ssec:analysis_progfed}, we compare ProgFed to other baselines. We additionally report the performance vs. computation costs and performance vs. epochs in Figure~\ref{fig:append_ablation}, where \qcr{Ours} reaches comparable performance while consuming the least cost.

\begin{figure}[tb]
    \centering
    \includegraphics[width=0.8\linewidth]{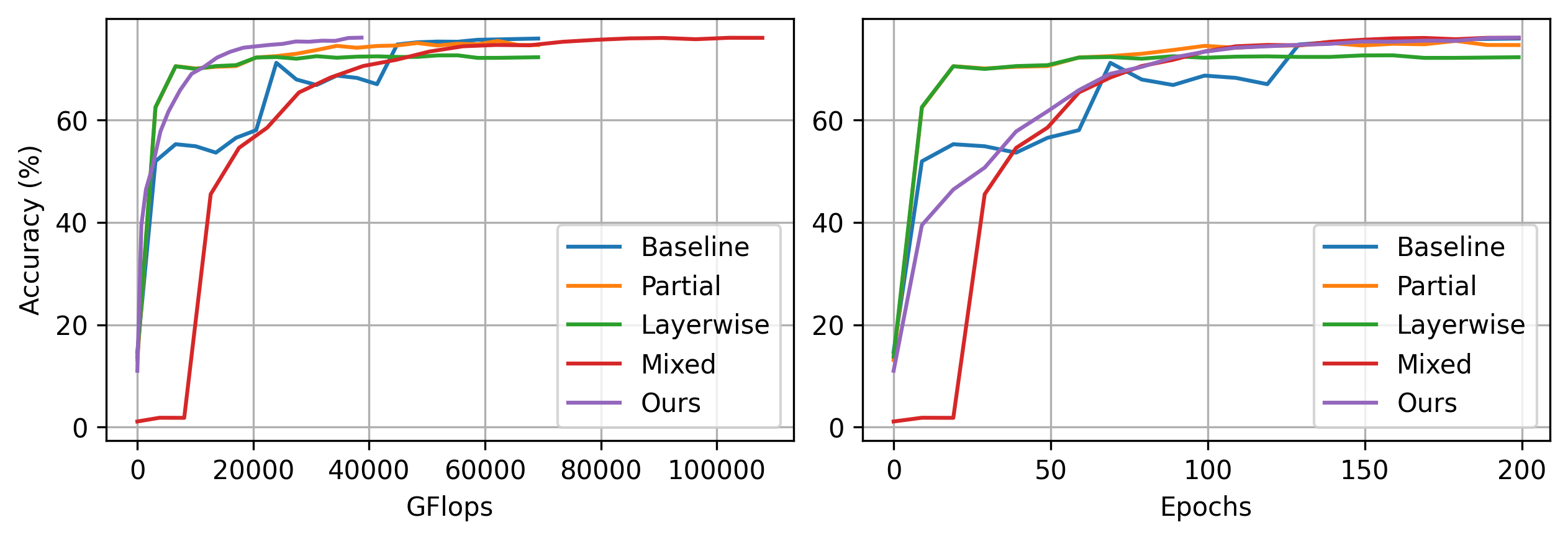}
    \caption{Performance vs. computation costs and Performance vs. epochs when comparing our method to different updating strategies.}
    \label{fig:append_ablation}
\end{figure}

\begin{figure}[tbp]
    \centering
    \includegraphics[width=\linewidth]{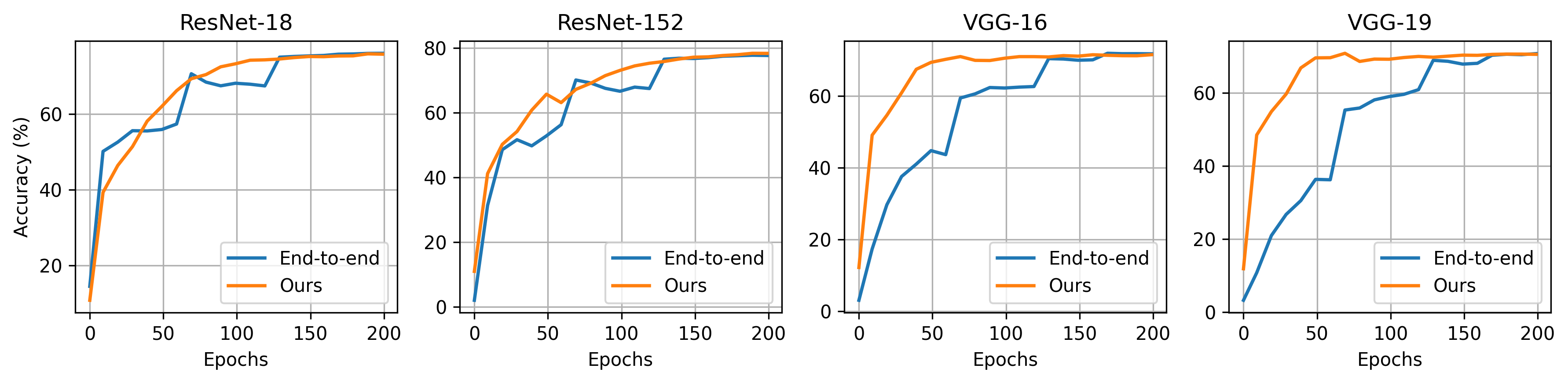}
    \caption{Accuracy (\%) vs. Epochs on CIFAR-100 in the centralized setting.}
    \label{fig:append-cen-clf-epochs}
\end{figure}

\begin{figure}[tbp]
    \centering
    \includegraphics[width=\linewidth]{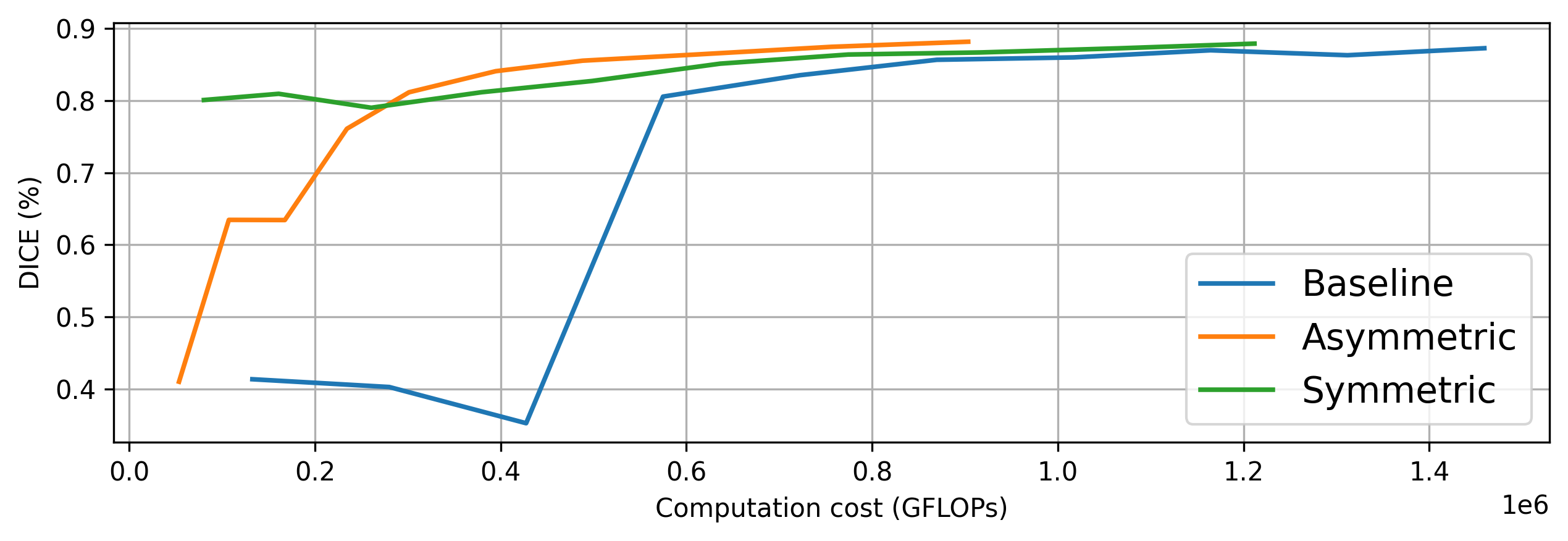}
    \caption{DICE (\%) vs. computation costs on BraTS.}
    \label{fig:append-fed-seg-dice-flop}
\end{figure}

\begin{figure}[tbp]
    \centering
    \includegraphics[width=\linewidth]{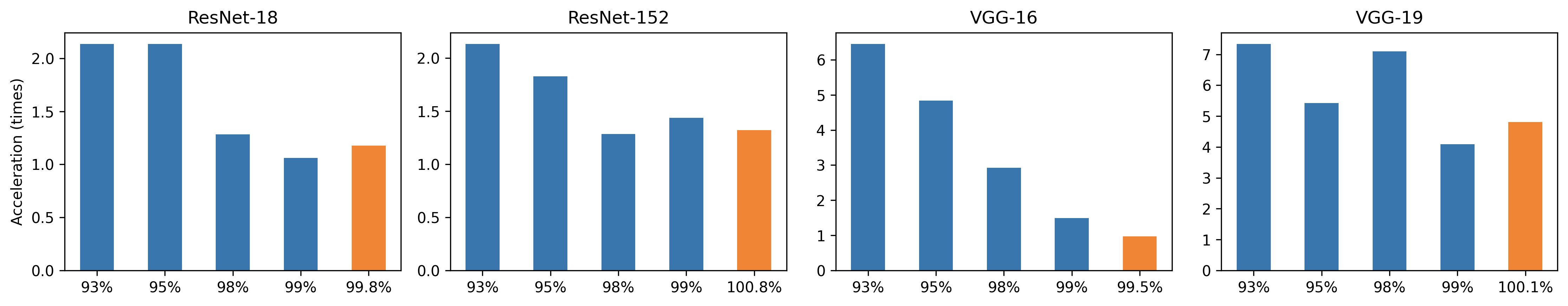}
    \caption{Computation acceleration at different percentage of performance. The orange bar indicates the best performance of our method.}
    \label{fig:append-cen-clf-accer-bar}
\end{figure}

\begin{figure}[tbp]
\begin{minipage}[t]{0.48\linewidth}
    \centering
    \includegraphics[width=\linewidth]{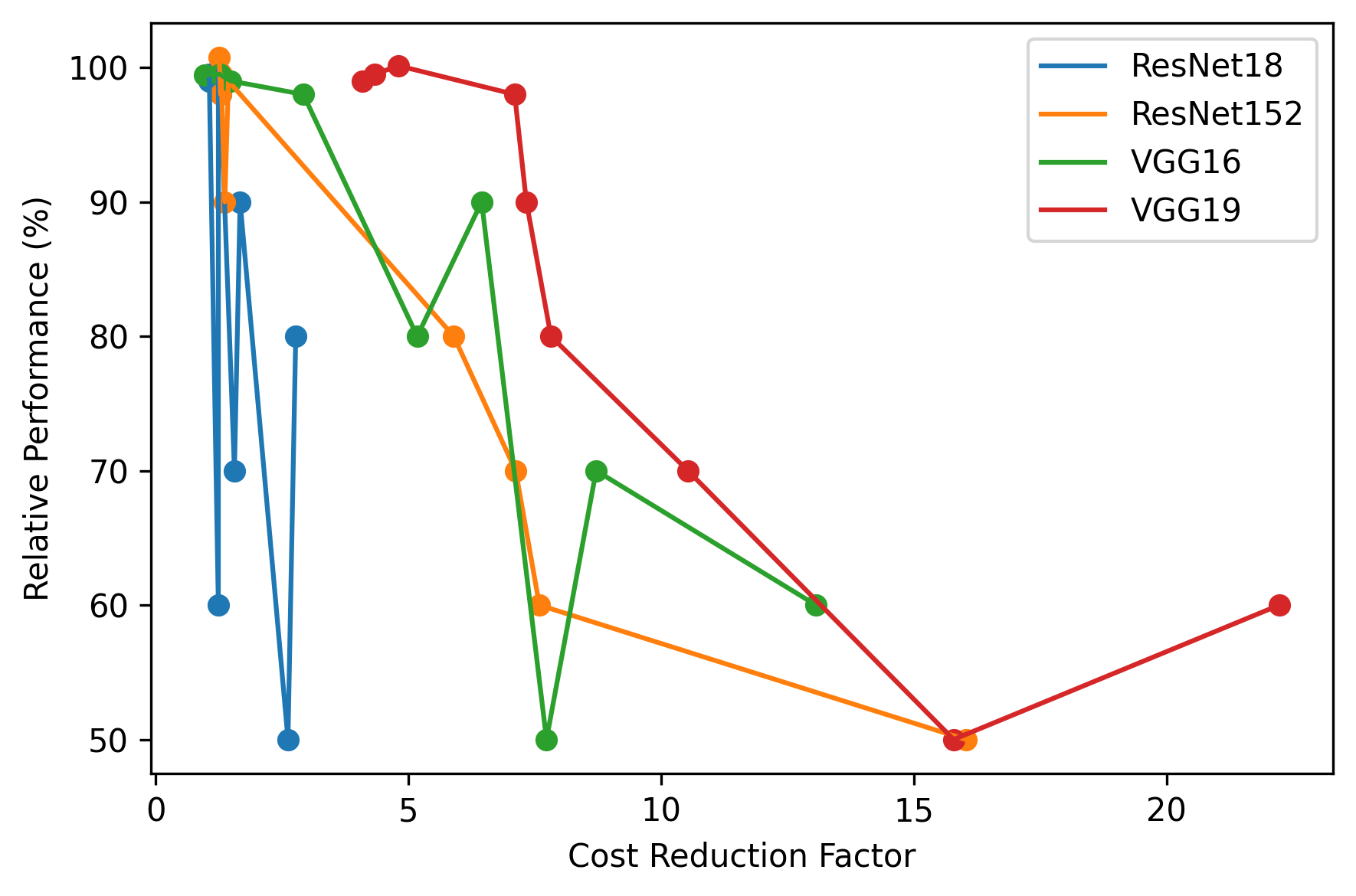}
    \caption{Computation cost reduction at $\{50\%$, $60\%$, $70\%$, $80\%$, $90\%$ $98\%$, $99\%$, $99.95\%$, $\textit{best}\}$ of the baseline performance in the centralized setting.} 
    \label{fig:append-cen-accer}
\end{minipage}
\begin{minipage}{0.02\linewidth}
\end{minipage}
\begin{minipage}[t]{0.48\linewidth}
    \centering
    \includegraphics[width=\linewidth]{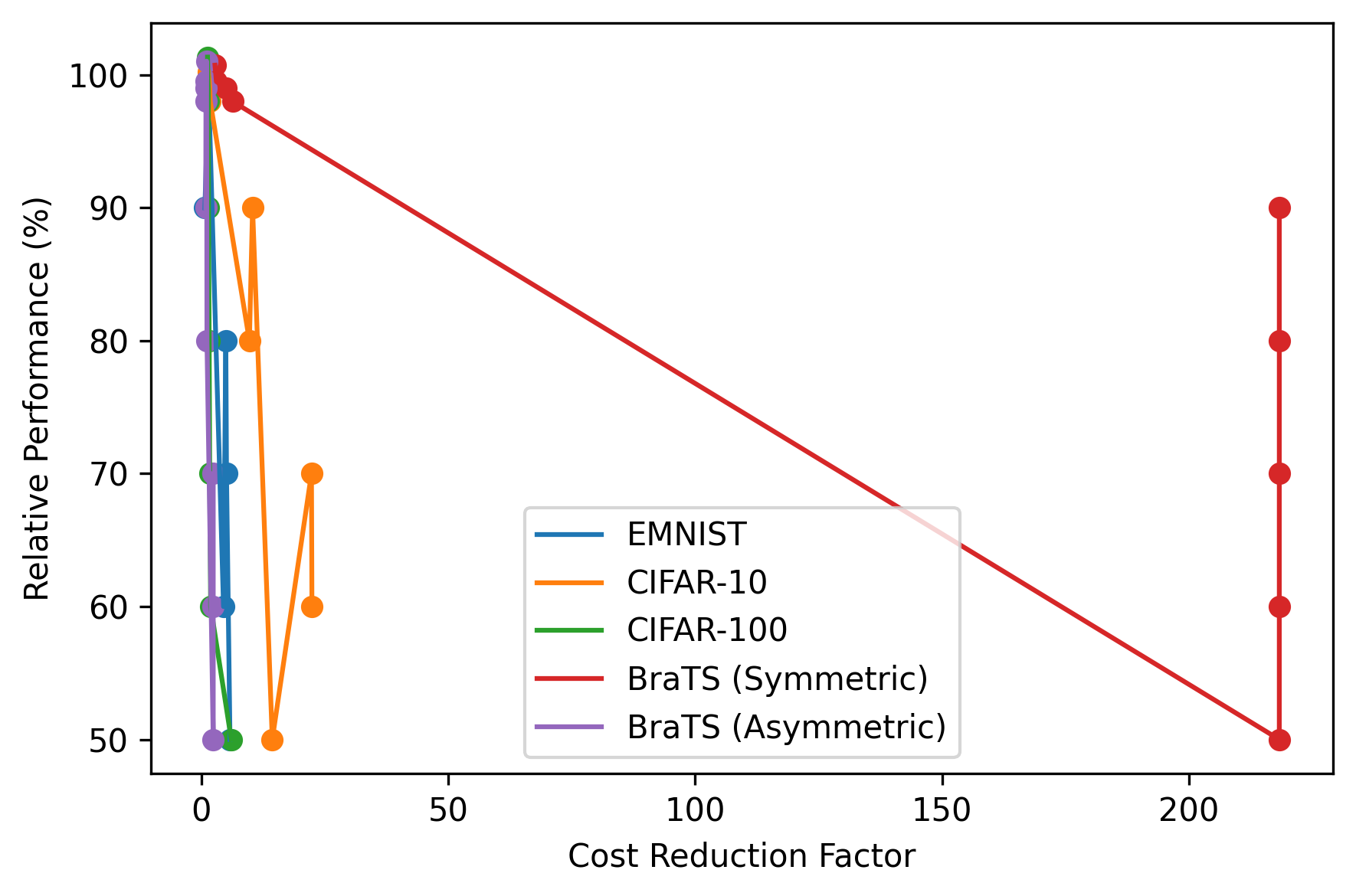}
    \caption{Communication cost reduction at $\{50\%$, $60\%$, $70\%$, $80\%$, $90\%$ $98\%$, $99\%$, $99.95\%$, $\textit{best}\}$ of the baseline performance in the federated setting.} 
    \label{fig:append-fed-accer}
\end{minipage}
\end{figure}

\subsection{Computation Efficiency}
We present more experiments in the centralized setting to prove the computation efficiency of our method.
Figure~\ref{fig:append-cen-clf-epochs} presents accuracy vs. epochs with four architectures on CIFAR-100. The result indicates that our method converges comparably faster to end-to-end training in practice.
Figure~\ref{fig:append-cen-clf-accer-bar} presents Figure~\ref{fig:cen-accer} in bar charts. Similar to Figure~\ref{fig:cen-accer}, our method improves across architectures while VGGs benefit even more from our method. Figure~\ref{fig:append-fed-seg-dice-flop} presents the computation costs of 3D-Unets on the BraTS dataset. We make the first observation that tumor segmentation requires heavy computation. Interestingly, even though the earlier stages of \textit{Symmetric} consume much fewer communication costs (Figure~\ref{fig:fed-cost-acc}), they require more computation costs than \textit{Asymmetric}. It might root from the higher resolution of feature maps that \textit{Symmetric} keeps and thus lead to a trade-off between communication and computation costs.

Figure~\ref{fig:append-cen-clf-accer-bar} extend Figure~\ref{fig:cen-accer} to a larger range $\{50\%$, $60\%$, $70\%$, $80\%$, $90\%$ $98\%$, $99\%$, $99.95\%$, $\textit{best}\}$. The result shows that our method benefits across models and is especially efficient when training budgets are limited.

\begin{figure}[tb]
\setlength{\tabcolsep}{1pt}
\begin{tabular}{cc}
\includegraphics[width=0.49\linewidth]{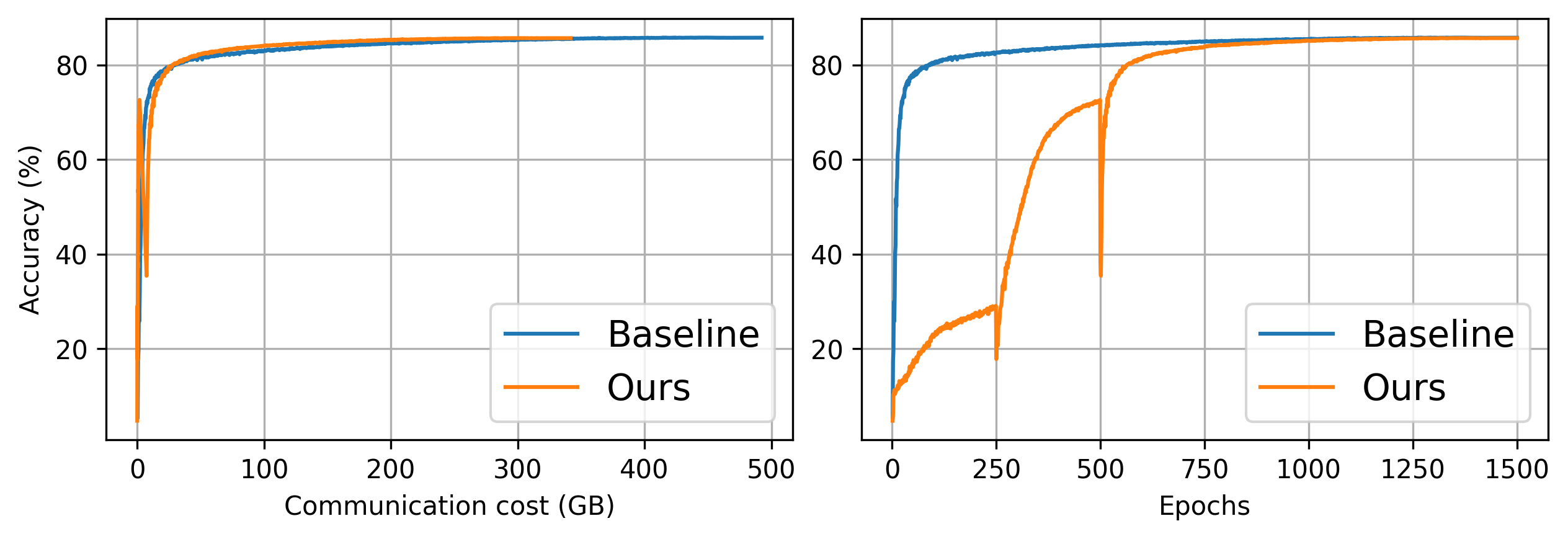} &
\includegraphics[width=0.49\linewidth]{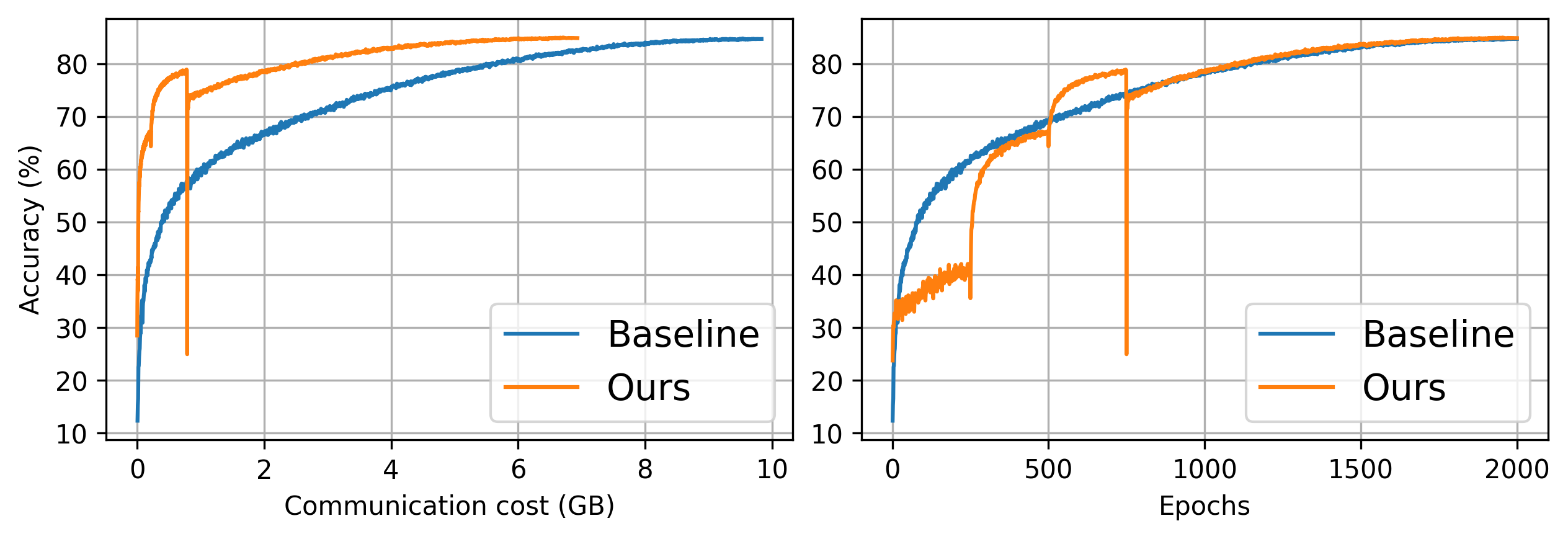} \\
(a) ConvNet on EMNIST & (b) ConvNet on CIFAR-10 \\ 
\includegraphics[width=0.49\linewidth]{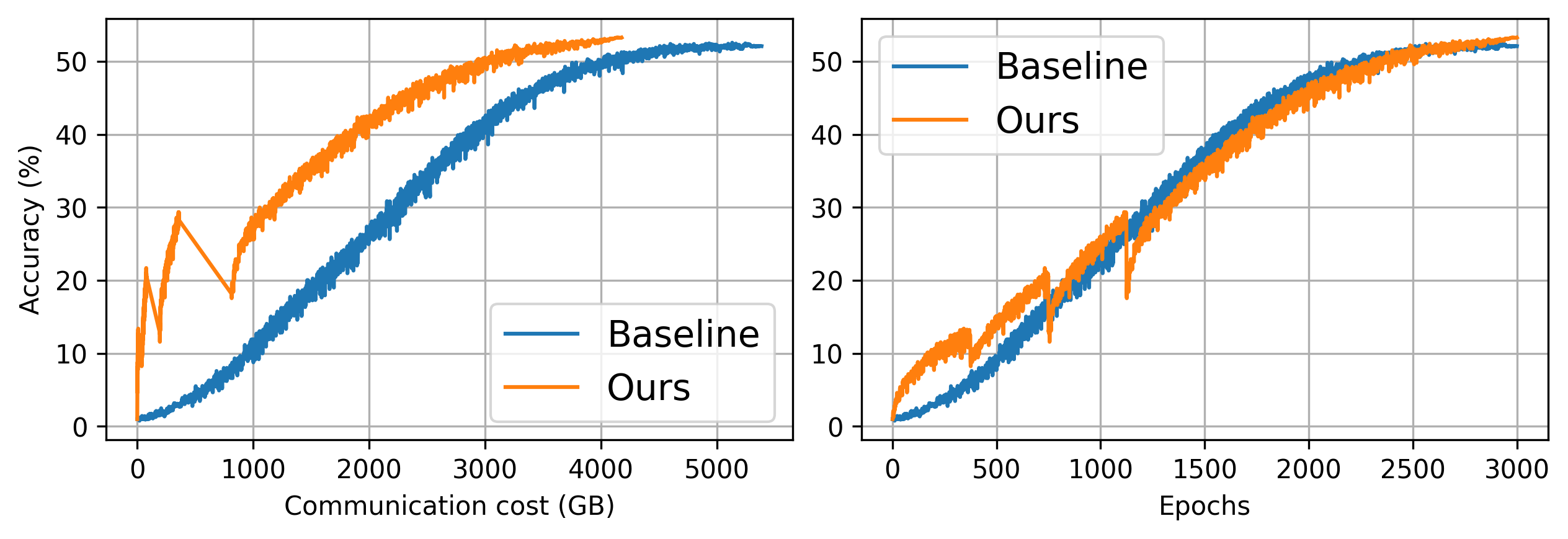} & \includegraphics[width=0.49\linewidth]{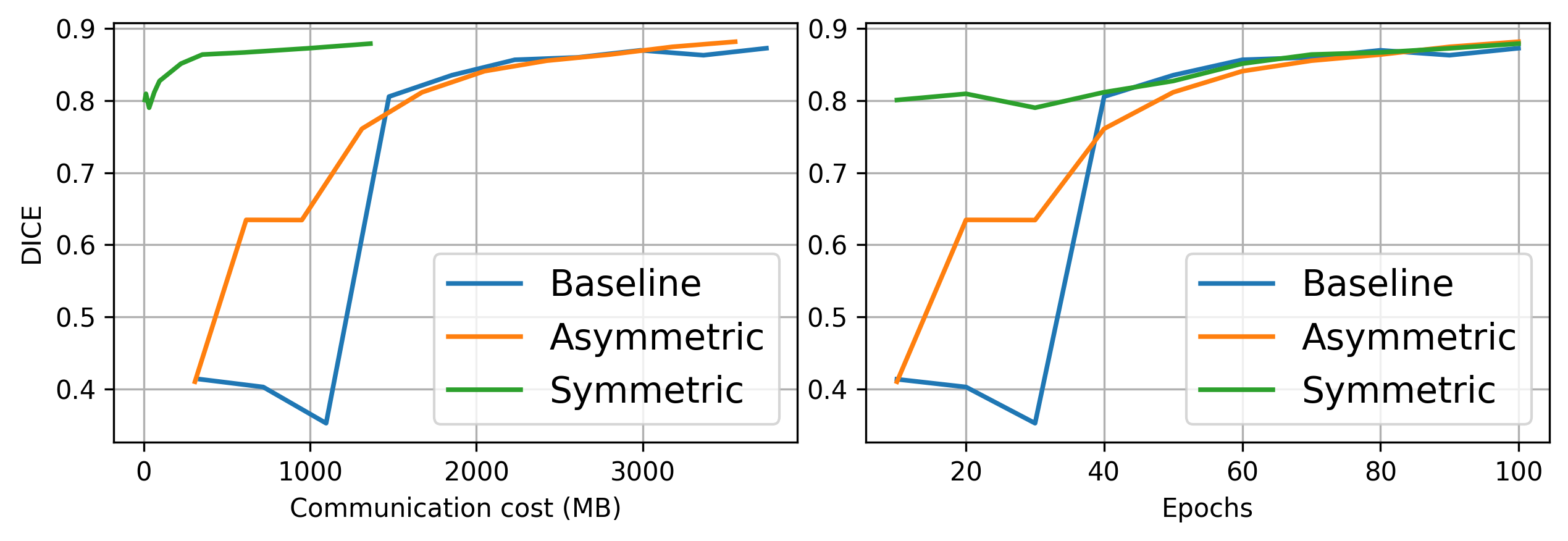}\\
(c) ResNet-18 on CIFAR-100 & (d) 3D-Unet on BraTS
\end{tabular}
\caption{Accuracy vs. computation costs and accuracy vs. epochs in the federated setting. (a)(b)(c) shows the result for three classification tasks; (d) shows the result for the segmentation task, where two update strategies \textit{Symmetric} and \textit{Asymmetric} are adopted for 3D-Unet.}
\label{fig:append-fed-tasks}
\end{figure}

\begin{figure}[tb]
    \centering
    \includegraphics[width=\linewidth]{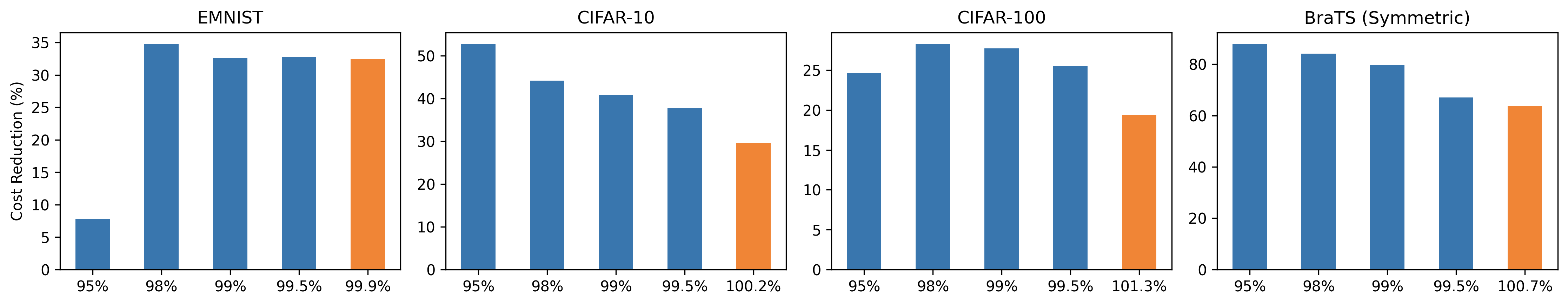}
    \caption{Communication cost reduction at different percentage of performance. The orange bar indicates the best performance of our method.}
    \label{fig:append-fed-accer-bar}
\end{figure}

\subsection{Communication Efficiency}
We present more experiments in the federated setting to prove the communication efficiency of our method.
To complement Figure~\ref{fig:fed-cost-acc}, we additionally visualize performance vs.\ communication costs and performance vs.\ epochs in Figure~\ref{fig:append-fed-tasks}. Although our method causes performance fluctuation in some datasets, the performance recovers very quickly. Figure~\ref{fig:append-fed-accer-bar} presents Figure~\ref{fig:fed-accer} in bar charts. The results show that our method saves considerable costs in almost all settings except for EMNIST at 95\%. It is because both baseline and our method improve fast at the beginning while our method stands out in the latter phase of training (e.g. after 98\%). The result also supports that our method improves across datasets.

We present Figure~\ref{fig:fed-accer} in a region that models are applicable. Here, we plot the figure with a larger range $\{50\%$, $60\%$, $70\%$, $80\%$, $90\%$ $98\%$, $99\%$, $99.95\%$, $\textit{best}\}$ in Figure~\ref{fig:append-fed-accer}. The result is consistent, showing that our method benefits across datasets, and is efficient when granted limited training budgets.

\begin{figure}[tbp]
\setlength{\tabcolsep}{1pt}
\centering
\begin{tabular}{ccccc}
 \includegraphics[width=0.16\linewidth]{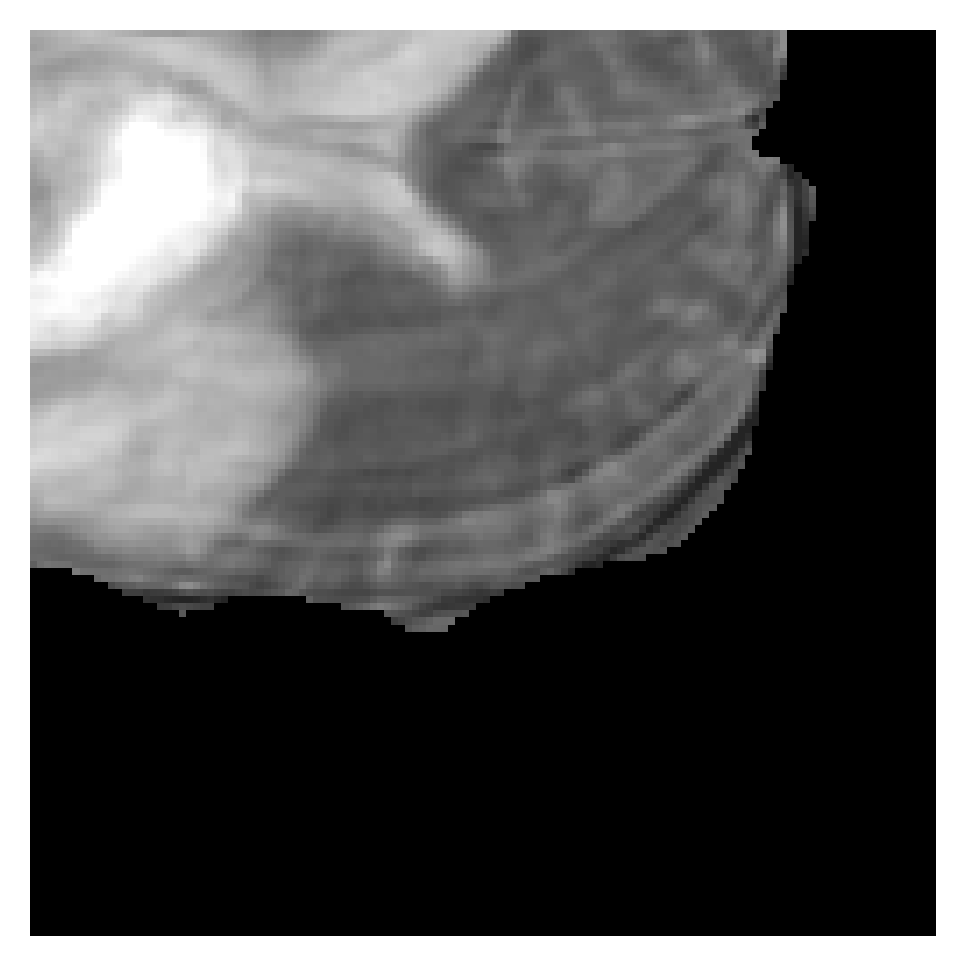} & \includegraphics[width=0.16\linewidth]{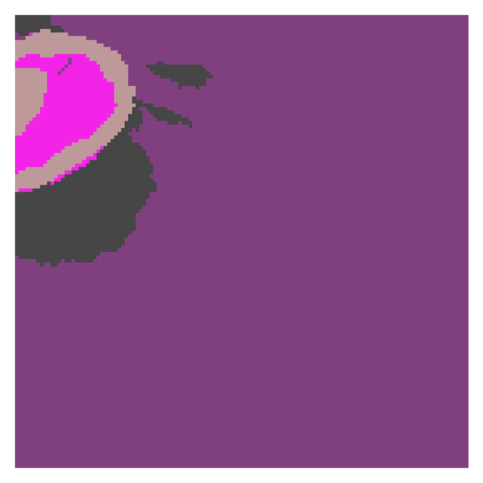} & \includegraphics[width=0.16\linewidth]{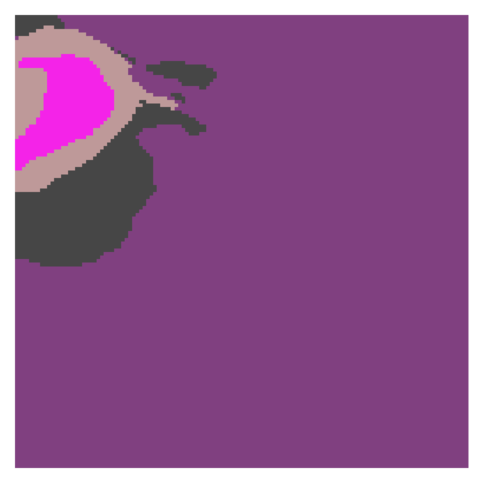} & \includegraphics[width=0.16\linewidth]{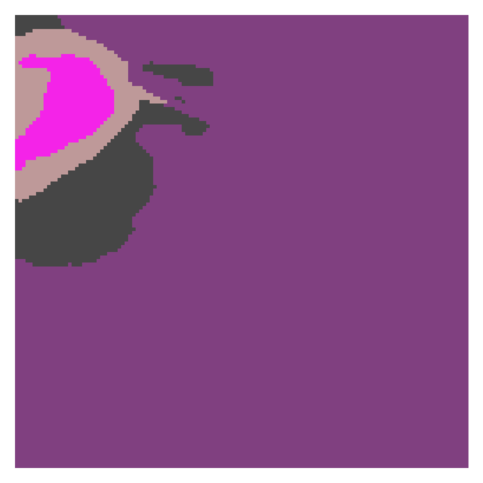} & \includegraphics[width=0.16\linewidth]{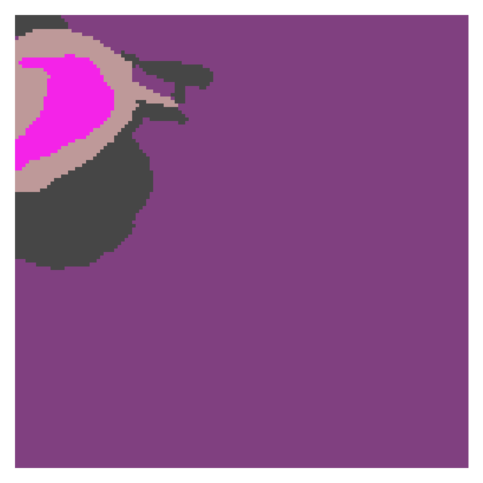} \\
 \includegraphics[width=0.16\linewidth]{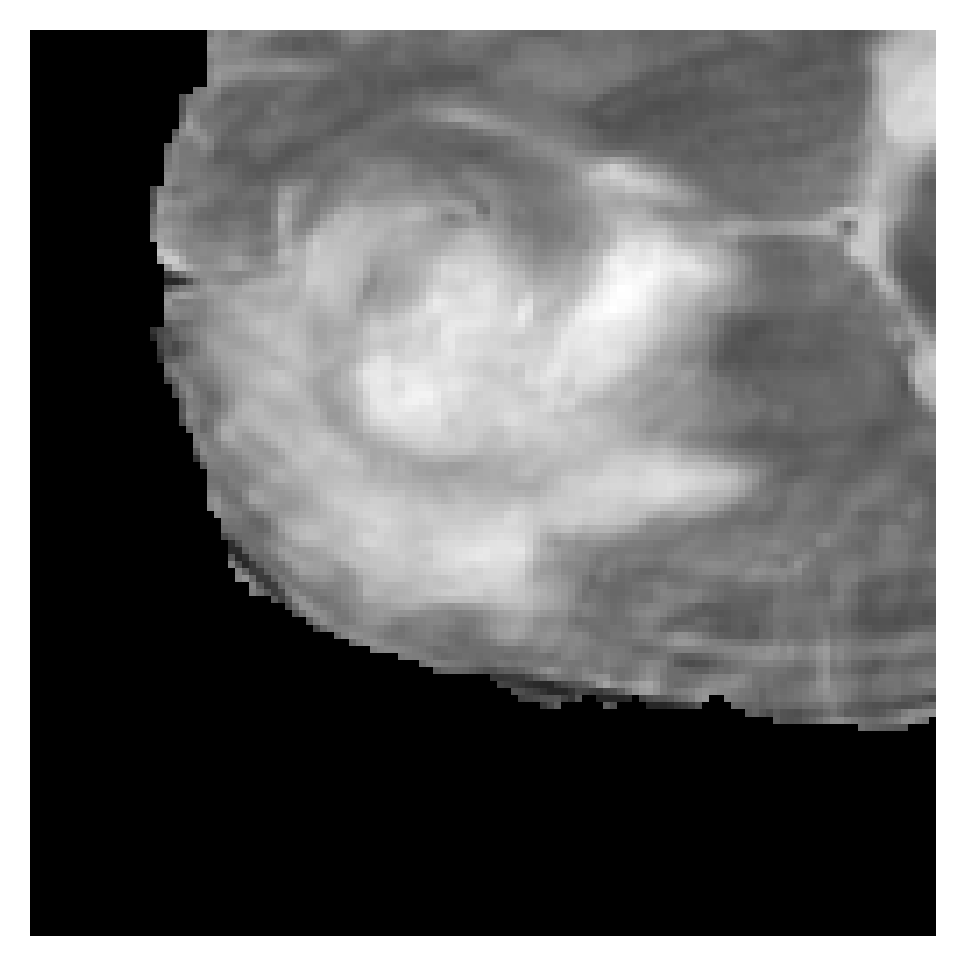} & \includegraphics[width=0.16\linewidth]{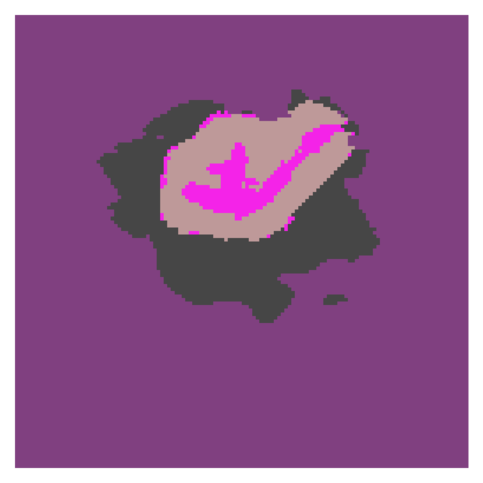} & \includegraphics[width=0.16\linewidth]{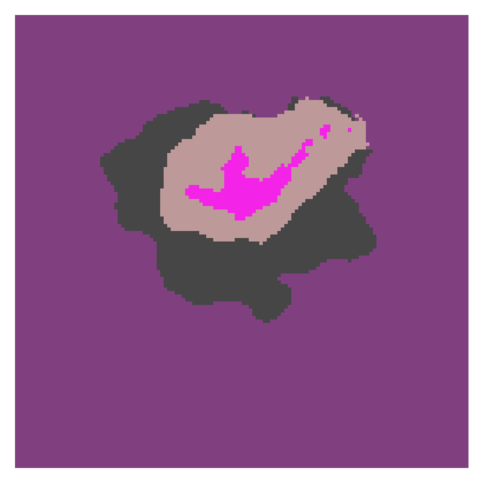} & \includegraphics[width=0.16\linewidth]{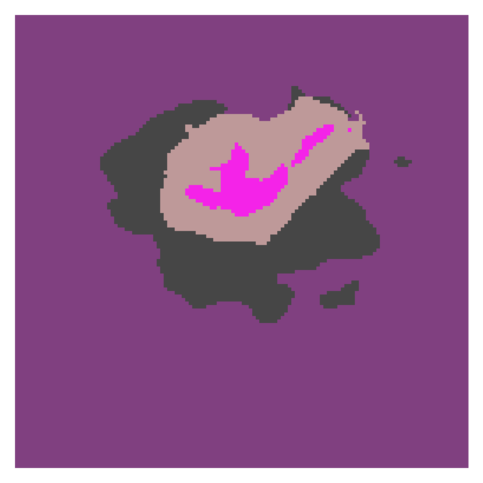} & \includegraphics[width=0.16\linewidth]{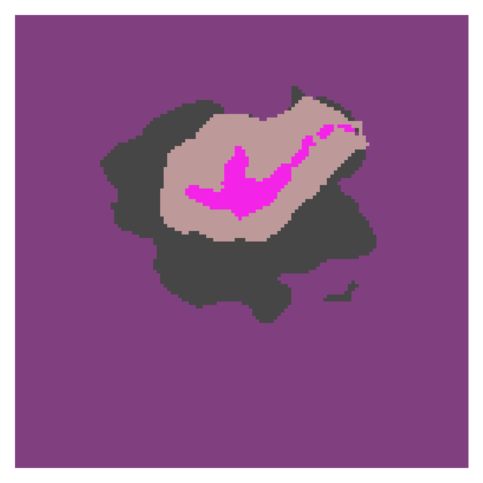} \\
 \includegraphics[width=0.16\linewidth]{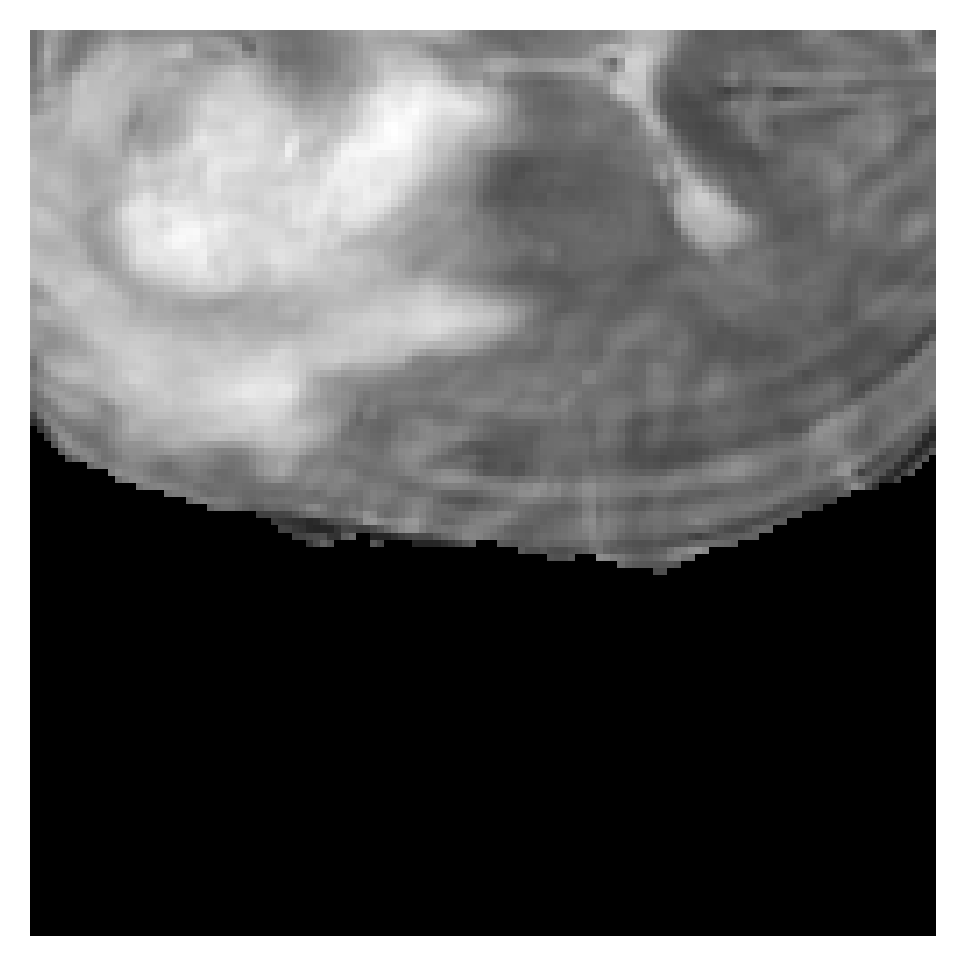} & \includegraphics[width=0.16\linewidth]{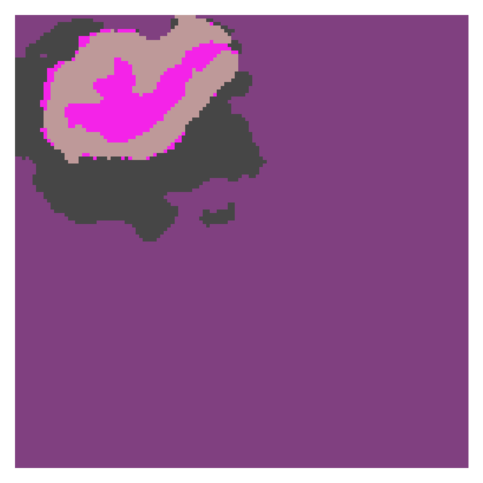} & \includegraphics[width=0.16\linewidth]{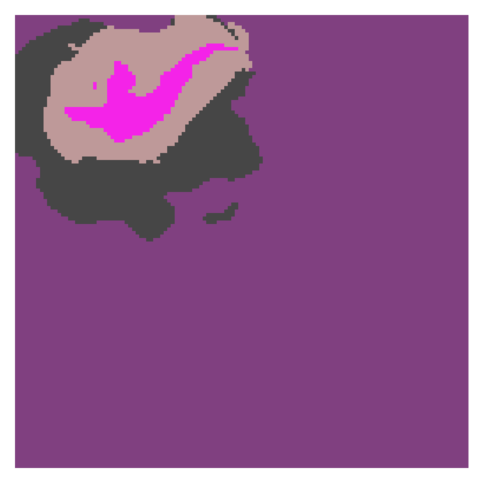} & \includegraphics[width=0.16\linewidth]{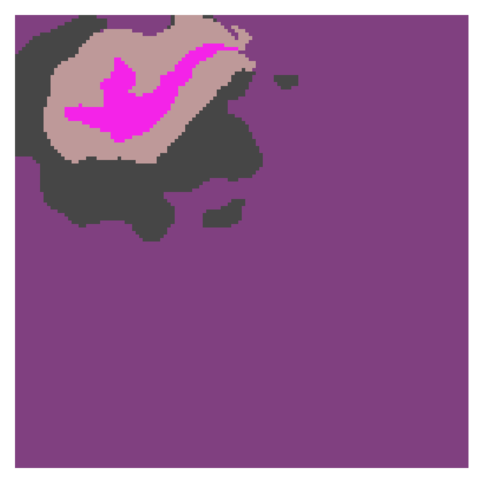} & \includegraphics[width=0.16\linewidth]{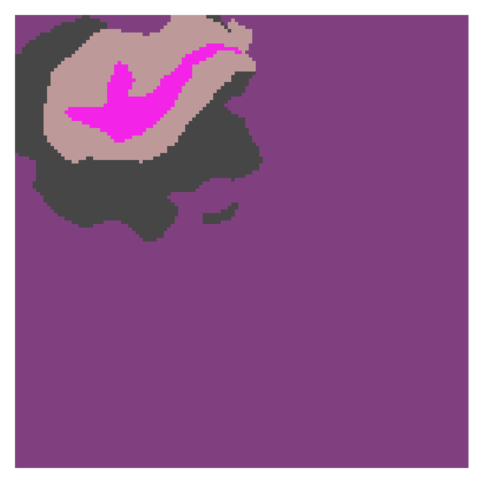} \\
{\small Input} & {\small Ground Truth} & {\small\begin{tabular}[c]{@{}c@{}}Baseline\\ (100\%)\end{tabular}} & {\small\begin{tabular}[c]{@{}c@{}}Asymmetric\\ (94.98\%)\end{tabular}} & {\small \begin{tabular}[c]{@{}c@{}}Symmetric\\ (36.40\%)\end{tabular}}
\end{tabular}
\caption{Visualization of federated segmentation. From left to right:\ Input, Ground Truth, Baseline, Asymmetric, and Symmetric updating strategies. Despite the comparable performance, \textit{Symmetric} consumes significantly fewer communication costs.}
\label{fig:append-vis-seg}
\end{figure}

\begin{figure}[bp]
\setlength{\tabcolsep}{1pt}
\centering
\begin{tabular}{ccccc}
  \includegraphics[width=0.16\linewidth]{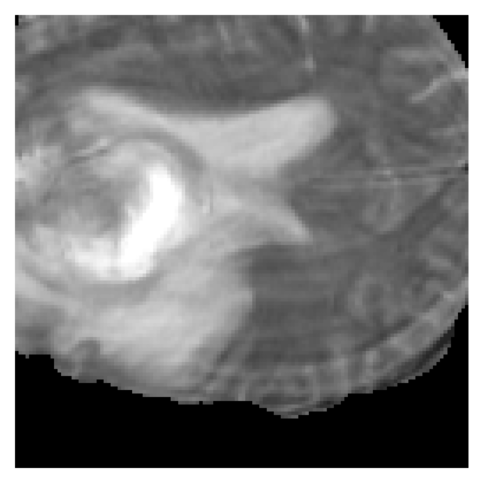} &   & \includegraphics[width=0.16\linewidth]{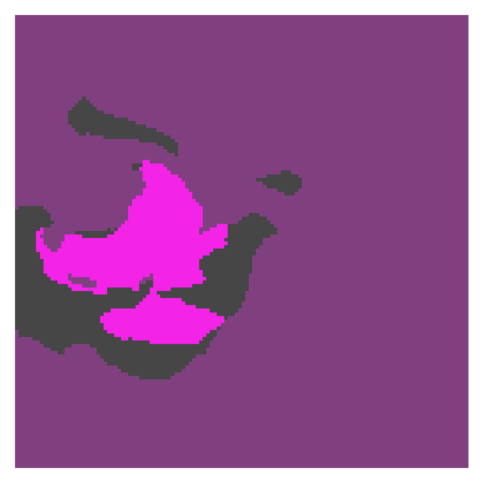} & \includegraphics[width=0.16\linewidth]{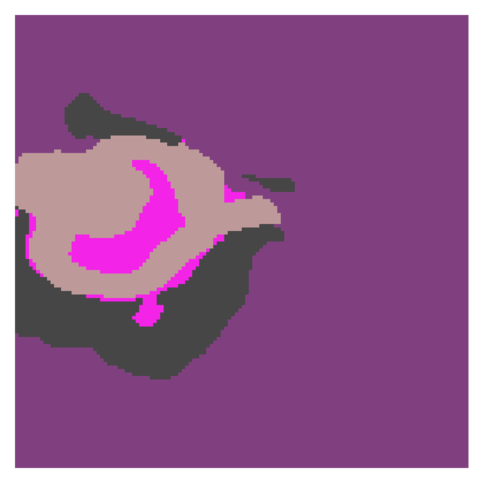} & \includegraphics[width=0.16\linewidth]{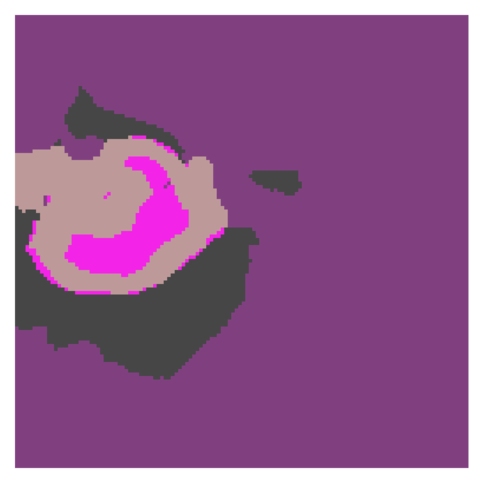} \\
 \includegraphics[width=0.16\linewidth]{images/vis-seg/x.png} &   & \includegraphics[width=0.16\linewidth]{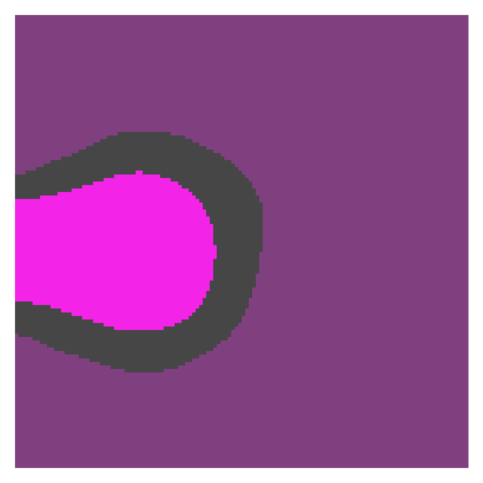} & \includegraphics[width=0.16\linewidth]{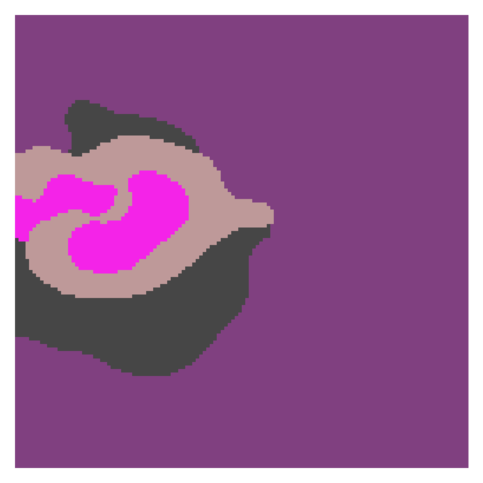} & \includegraphics[width=0.16\linewidth]{images/vis-seg/gt.png} \\
 \includegraphics[width=0.16\linewidth]{images/vis-seg/x.png} &\includegraphics[width=0.16\linewidth]{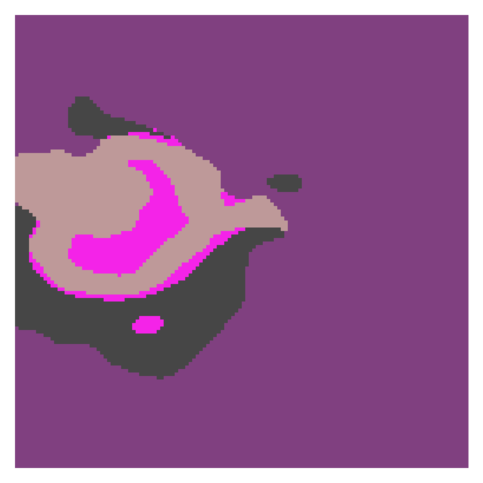}  & \includegraphics[width=0.16\linewidth]{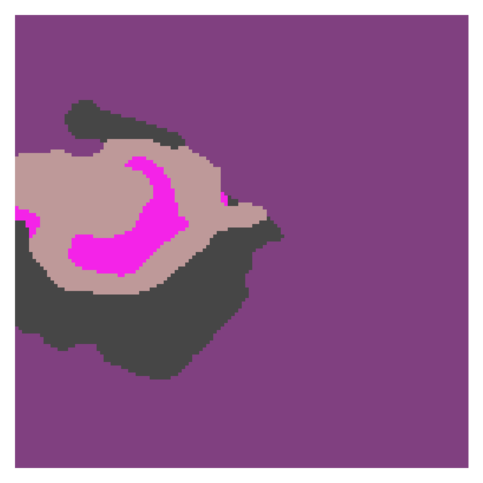} & \includegraphics[width=0.16\linewidth]{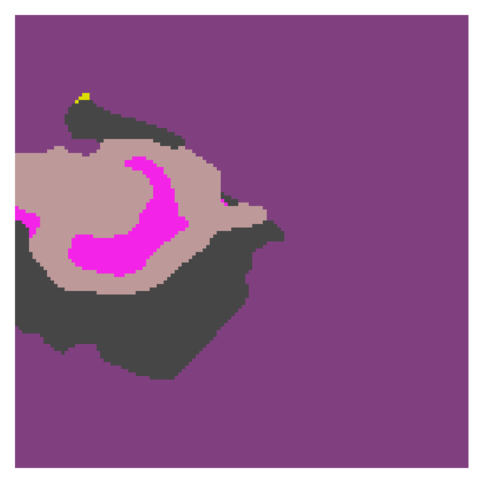} & \includegraphics[width=0.16\linewidth]{images/vis-seg/gt.png} \\
 {\small Input } &  {\small \begin{tabular}[c]{@{}c@{}} $\approx 0.18\%$  \end{tabular}} & {\small \begin{tabular}[c]{@{}c@{}} $\approx 9.40\%$  \end{tabular}} & {\small \begin{tabular}[c]{@{}c@{}} $\approx 36.40\%$  \end{tabular}} & {\small Ground Truth }
\end{tabular}
\caption{Segmentation results under \{$\approx0.18\%$, $\approx9.40\%$ , $\approx36.40\%$\} of communication costs of the converged baseline. From top to bottom: baseline, \textit{Asymmetric} (Ours), and \textit{Symmetric} (Ours). Only \textit{Symmetric} can achieve $0.18\%$ (6.7536 MB) compression ratio, since the size of the other models is already around 34 MB (i.e. $0.908\%$).}
\label{fig:append-seg-cost}
\end{figure}

\subsection{Visualization of Federated Segmentation}
\label{ssec:append-vis-seg}
We visualize the outputs of 3D-Unet (see Section~\ref{ssec:communication_efficiency}) in Figure~\ref{fig:append-vis-seg}. The result matches the number reported in Table~\ref{table:fed-tasks} that the models perform similarly after they have converged. However, the communication cost consumption is disparate. \textit{Symmetric} only consumes $36.40\%$ while the others either do not save any cost or marginally improve it. We visualize the results in Figure~\ref{fig:append-seg-cost} when granted limited communication budgets. Note that even with the same cost, \textit{Symmetric} and \textit{Asymmetric} may stay in different stages because \textit{Symmetric} starts from the outer part, consisting of significantly fewer parameters. We find that Baseline and \textit{Asymmetric} fail to compress the models with $0.18\%$ since their models are of size 34 MB (i.e. $0.908\%$), while only \textit{Symmetric} achieve it. Interestingly, \textit{Symmetric} has produced promising results at $0.18\%$ of costs. It suggests that our method significantly facilitates learning even given limited communication budgets. Meanwhile, we hope these findings could inspire more further work on medical learning problems.

\begin{table*}[tbp]
\centering
\caption{Raw results on CIFAR-100 with four architectures in the centralized setting (to complement Table~\ref{table:centralized_clf}).}
\label{table:append-centralized_clf}
\vskip 0.1in
\begin{tabular}{@{}llccccccccc@{}}
\toprule
 & \multicolumn{5}{c}{End-to-End} & \multicolumn{5}{c}{ProgFed (Ours)} \\ \cmidrule(l){2-11} 
 & Seed1 & Seed2 & Seed3 & Mean & \multicolumn{1}{c|}{Std} & Seed1 & Seed2 & Seed3 & Mean & Std \\
ResNet-18 & 75.95 & 76.12 & 76.17 & 76.08 & \multicolumn{1}{c|}{0.12} & 75.57 & 76.13 & 75.83 & 75.84 & 0.28 \\
ResNet-152 & 77.69 & 77.44 & 78.19 & 77.77 & \multicolumn{1}{c|}{0.38} & 78.95 & 78.46 & 78.31 & 78.57 & 0.33 \\
VGG16 (bn) & 71.94 & 71.77 & 71.65 & 71.79 & \multicolumn{1}{c|}{0.15} & 71.36 & 72.05 & 71.21 & 71.54 & 0.45 \\
VGG19 (bn) & 69.47 & 71.25 & 71.70 & 70.81 & \multicolumn{1}{c|}{1.18} & 71.30 & 70.45 & 70.95 & 70.90 & 0.43 \\ \bottomrule
\end{tabular}
\vskip -0.1in
\end{table*}

\begin{table}[tbp]
\centering
\caption{Raw results in federated settings on EMNIST, CIFAR-10, and CIFAR-100 (to complement Table~\ref{table:fed-tasks}).}
\label{table:append-fed-tasks}
\vskip 0.1in
\begin{tabular}{@{}lccccc@{}}
\toprule
 & \multicolumn{5}{c}{EMNIST} \\ \cmidrule(l){2-6} 
 & Seed1 & Seed2 & Seed3 & Mean & Std \\
Baseline & 85.63 & 85.77 & 85.85 & 85.75 & 0.11 \\
Ours & 85.65 & 85.62 & 85.73 & 85.67 & 0.06 \\ \midrule
 & \multicolumn{5}{c}{CIFAR-10} \\ \cmidrule(l){2-6} 
 & Seed1 & Seed2 & Seed3 & Mean & Std \\
Baseline & 84.62 & 84.82 & 84.56 & 84.67 & 0.14 \\
Ours & 84.64 & 84.73 & 85.19 & 84.85 & 0.30 \\ \midrule
 & \multicolumn{5}{c}{CIFAR-100} \\ \cmidrule(l){2-6} 
 & Seed1 & Seed2 & Seed3 & Mean & Std \\
Baseline & 51.79 & 51.86 & 52.58 & 52.08 & 0.44 \\
Ours & 53.33 & 53.16 & 53.21 & 53.23 & 0.09 \\ \bottomrule
\end{tabular}
\vskip -0.1in
\end{table}

\begin{table}[tbp]
\centering
\caption{Raw results on BraTS in the federated setting (to complement Table~\ref{table:fed-tasks}).}
\label{table:append-fed-seg}
\vskip 0.1in
\begin{tabular}{@{}lccccc@{}}
\toprule
 & Seed 1 & Seed 2 & Seed 3 & Mean & Std \\ \midrule
Baseline & 87.29 & 86.51 & 86.51 & 86.77 & 0.45 \\
Idea1 (Ours) & 88.19 & 87.22 & 87.57 & 87.66 & 0.49 \\
Idea2 (Ours) & 87.92 & 87.98 & 87.98 & 87.96 & 0.03 \\ \bottomrule
\end{tabular}
\vskip -0.1in
\end{table}

\begin{table*}[tbp]
\centering
\caption{Federated ResNet-18 on CIFAR-100 with compression. LQ-X denotes linear quantization followed by used bits representing gradients, and SP-X denotes sparsification followed by the percentage of kept gradients. (to complement Table~\ref{table:compress}).}
\label{table:append-compress}
\vskip 0.1in
\begin{tabular}{@{}lcccc@{}}
\toprule
 & Float & LQ-8 & LQ-4 & LQ-2 \\ \midrule
 & \multicolumn{4}{c}{Accuracy (\%)} \\ \cmidrule(l){2-5} 
Baseline & 52.08 $\pm$ 0.44 & 49.40 $\pm$ 0.75 & 49.55 $\pm$ 0.59 & 47.26 $\pm$ 0.29 \\
Ours & \textbf{53.23 $\pm$ 0.09} & \textbf{53.07 $\pm$ 1.00} & \textbf{52.32 $\pm$ 0.15} & \textbf{52.87 $\pm$ 0.54} \\ \midrule
 & \multicolumn{4}{c}{Compression ratio (\%)} \\ \cmidrule(l){2-5} 
Baseline & 100 & 25.00 & 12.50 & 6.25 \\
Ours & \textbf{77.10} & \textbf{19.28} & \textbf{9.64} & \textbf{4.82} \\ \midrule
 & SP-25 & SP-10 & \begin{tabular}[c]{@{}c@{}}LQ-8\\ +SP-25\end{tabular} & \begin{tabular}[c]{@{}c@{}}LQ-8\\ +SP-10\end{tabular} \\ \midrule
 & \multicolumn{4}{c}{Accuracy (\%)} \\ \cmidrule(l){2-5} 
Baseline & 51.23 $\pm$ 0.56 & 51.79 $\pm$ 0.10 & 49.67 $\pm$ 1.58 & 50.25 $\pm$ 1.03 \\
Ours & \textbf{52.00 $\pm$ 0.19} & \textbf{51.86 $\pm$ 0.23} & \textbf{52.19 $\pm$ 0.03} & \textbf{52.24 $\pm$ 0.12} \\ \midrule
 & \multicolumn{4}{c}{Compression ratio (\%)} \\ \cmidrule(l){2-5} 
Baseline & 25.00 & 10.00 & 6.25 & 2.50 \\
Ours & \textbf{19.28} & \textbf{7.71} & \textbf{4.82} & \textbf{1.93} \\ \bottomrule
\end{tabular}
\vskip -0.1in
\end{table*}

\subsection{Raw numbers and more statistics}
Table~\ref{table:append-centralized_clf}~\ref{table:append-fed-tasks}, and~\ref{table:append-fed-seg} present the raw numbers of the experiments in Table~\ref{table:centralized_clf} and \ref{table:fed-tasks} over three random seeds. Table~\ref{table:append-compress} presents the standard deviations of Table~\ref{table:compress} over three random seeds.

\begin{table*}[tbp]
\caption{Ablation study on numbers of stages with different numbers of total epochs on CIFAR-100.}
\label{table:append_ablation_s}
\centering
\vskip 0.1in
\begin{tabular}{@{}lcccc@{}}
\toprule
 & Accuracy (\%) & Cost (GB) & Accuracy (\%) & Cost (GB) \\ \midrule
 & \multicolumn{2}{c}{\#epoch=3000} & \multicolumn{2}{c}{\#epoch=4000} \\
End-to-End & 52.08 & 5368 & - & - \\
ProgFed (S=3) & 51.33 & 4418 & 53.80 & 5695 \\
ProgFed (S=4) & 53.23 & 4179 & - & - \\
ProgFed (S=5) & 51.53 & 4264 & 54.25 & 5479 \\
ProgFed (S=8) & 50.70 & 3901 & 54.46 & 5166 \\ \bottomrule
\end{tabular}
\vskip -0.1in
\end{table*}

\begin{table}[tbp]
\caption{Communication costs of ProgFed when achieving the performance of end-to-end training (52.08\%).}
\label{table:append_ablation_s_same_baseline}
\centering
\vskip 0.1in
\begin{tabular}{@{}lcc@{}}
\toprule
 & \#Total\_epochs & Cost \\ \midrule
End-to-End & 3000 & 5368 \\
ProgFed (S=3) & 4000 & 4365 \\
ProgFed (S=4) & 3000 & 3804 \\
ProgFed (S=5) & 4000 & 3781 \\
ProgFed (S=8) & 4000 & 3421 \\ \bottomrule
\end{tabular}
\vskip -0.1in
\end{table}

\begin{table}[tbp]
\caption{Performance of ProgFed when consuming the same amount of communication costs as S=4 and \#Total\_epochs = 3000 (4179GB).}
\label{table:append_ablation_s_same_cost}
\centering
\vskip 0.1in
\begin{tabular}{@{}lrr@{}}
\toprule
 & \multicolumn{1}{l}{\#Total\_epochs} & \multicolumn{1}{l}{Accuracy (\%)} \\ \midrule
End-to-end & 3000 & 51.19 \\
ProgFed (S=3) & 4000 & 52.33 \\
ProgFed (S=4) & 3000 & 53.23 \\
ProgFed (S=5) & 4000 & 53.56 \\
ProgFed (S=8) & 4000 & 53.50 \\ \bottomrule
\end{tabular}
\vskip -0.1in
\end{table}

\subsection{Ablation study on the number of stages}
\label{ssec:append_ablation_s}
We conduct an ablation study on CIFAR-100 with ResNet-18 to verify the influence of different numbers of stages $S$, namely $S=3$, $4$, $5$, and $8$. We set the remaining hyper-parameters as discussed in Section~\ref{ssec:proposed_method} to ensure a fair comparison. Table~\ref{table:append_ablation_s} shows that $S=4$ performs the best among all settings when the number of total epochs is equal to 3000. However, Theorem~\ref{thm:convergence} suggests that ProgFed could be at most two times slower than the end-to-end training. Therefore, we show that the performance immediately improves if we slightly increase the number of total epochs. Table~\ref{table:append_ablation_s} summarizes the result. Despite the improved performance, the final costs also increase subsequently. 

To fairly compare these two settings, we summarize the results in Table~\ref{table:append_ablation_s_same_baseline} when all settings achieve the end-to-end baseline performance (52.08\%). It is observed that all settings, including those with more epochs, achieve the same performance as the baseline at much fewer costs. Moreover, the cost decreases as the number of stages $S$ increases. On the other hand, we report the performance when all settings consume the same amount of costs as $S=4$ with 3000 epochs. Table~\ref{table:append_ablation_s_same_cost} shows that all settings perform better than the baseline and similarly well except $S=3$. It might indicate that ProgFed is insensitive to $S$ and the number of total epochs when given the same budget.

\section{More Related work}
\label{sec:append-related}
We discuss more related work in this section.

\myparagraph{Model pruning} Model pruning removes redundant weights to address the resource constraints~\citep{mozer1989skeletonization,lecun1990optimal,frankle2018the,lin2019dynamic}. There are two categories: \textit{unstructured} and \textit{structured} pruning. \textit{Unstructured} methods prune individual model weights according to certain criteria such as Hessian of the loss function~\citep{lecun1990optimal, hassibi1993second} and small magnitudes~\citep{han2015learning}. However, these methods cannot fully accelerate without dedicated hardware since they often result in sparse weights. In contrast, \textit{structured} pruning methods prune channels or even layers to alleviate the issue. They often learn importance weights for different components and only keep relatively important ones~\citep{liu2017learning, yu2018slimmable,  mohtashami2021simultaneous, li2021dynamic}. Despite the efficiency, model pruning usually happens at inference and does not reduce training costs while our method achieves computation- and communication-efficiency even during training. 

\myparagraph{Model distillation} Another line of research for model compression is model distillation~\citep{bucilua2006model,hinton2015distilling}, which requires a student model to mimic the behavior of a teacher model~\citep{polino2018model, sun2019patient}. Recent work has investigated transmitting logits rather than gradients~\citep{li2019fedmd, lin2020ensemble, he2020group, choquette2020capc}, which significantly reduces communication costs. However, these methods either require additional query datasets~\citep{li2019fedmd, lin2020ensemble} or cannot enjoy the merit of datasets from different sources~\citep{choquette2020capc}. In contrast, our work reduces the costs while retaining the dataset efficiency.

\end{document}